\newcommand{\prox}{\mathrm{prox}}
\newcommand{\supp}{\mathrm{supp}}
\newcommand{\bm}[1]{\boldsymbol{#1}}
\newcommand{\mbr}{\mathbb{R}}
\newcommand{\mbfi}{\mathbf{I}}
\newcommand{\mbfp}{\mathbf{P}}
\newcommand{\mbfu}{\mathbf{U}}
\newcommand{\mbfv}{\mathbf{V}}
\newcommand{\mbfw}{\mathbf{W}}
\newcommand{\mbfx}{\mathbf{X}}
\newcommand{\mce}{\mathcal{E}}
\newcommand{\mci}{\mathcal{I}}
\newcommand{\mcn}{\mathcal{N}}
\newcommand{\mcs}{\mathcal{S}}
\newcommand{\mcu}{\mathcal{U}}
\def\myref{{\global\advance\refnum by 1} {\bf \large Lecture \the \refnum. }}
\newcommand{\bfx}{{\bf x}}
\newcommand{\bfb}{{\bf b}}
\newcommand{\bfe}{{\bf e}}
\newcommand{\bfg}{{\bf g}}
\newcommand{\bfz}{{\bf z}}
\newcommand{\bfu}{{\bf u}}
\newcommand{\bfv}{{\bf v}}
\newcommand{\bfw}{{\bf w}}
\newcommand{\sign}{{\mathrm{sign}}}
\journalname{Pattern Anal Applic}
\begin{document}

\title{Proximal gradient method for huberized support vector machine
}

\titlerunning{Proximal gradient for huberized SVMs}        

\author{Yangyang Xu         \and
        Ioannis Akrotirianakis	  \and
	Amit Chakraborty
}


\institute{Yangyang Xu \at
              Department of Computational and Applied Mathematics, Rice University, 		              Houston, TX. \\
              \email{yangyang.xu@rice.edu}            \\
           \and
           Ioannis Akrotirianakis \at
              Siemens Corporate Research, Princeton, NJ. \\
		\email{ioannis.akrotirianakis@siemens.com}  \\
	   \and
	  Amit Chakraborty \at
	 Siemens Corporate Research, Princeton, NJ. \\
		\email{amit.chakraborty@siemens.com}  \\
}

\date{Received: date / Accepted: date}

\maketitle

\begin{abstract}
The Support Vector Machine (SVM) has been used in a wide variety of classification problems. The original SVM uses the hinge loss function, which is non-differentiable and makes the problem difficult to solve in particular for regularized SVMs, such as with $\ell_1$-regularization. This paper considers the Huberized SVM (HSVM), which uses a differentiable approximation of the hinge loss function. We first explore the use of the Proximal Gradient (PG) method to solving binary-class HSVM (B-HSVM) and then generalize it to multi-class HSVM (M-HSVM). Under strong convexity assumptions, we show that our algorithm converges linearly. In addition, we give a finite convergence result about the support of the solution, based on which we further accelerate the algorithm by a two-stage method. We present extensive numerical experiments on both synthetic and real datasets which demonstrate the superiority of our methods over some state-of-the-art methods for both binary- and multi-class SVMs.
\keywords{Support Vector Machine \and Proximal Gradient method \and Binary and multiclass classification problems \and Huberized hinge loss \and Elastic net \and Linear convergence.}
\end{abstract}

\section{Introduction}
The original linear support vector machine (SVM) aims to find a hyperplane that separates a collection of data points. Since it was proposed in \cite{cortes1995support}, it has been widely used for binary classifications, such as texture classification \cite{kim2002support}, gene expression data analysis \cite{brown2000knowledge}, face recognition \cite{heisele2001face}, to name a few. Mathematically, given a set of samples $\{\bfx_i\}_{i=1}^n$ in $p$-dimensional space and each $\bfx_i$ attached with a label $y_i\in\{+1,-1\}$, the linear SVM learns a hyperplane $(\bfw^*)^\top\bfx+b^*=0$ from the training samples.
A new data point $\bfx$ can be categorized into the ``$+1$'' or ``$-1$'' class by inspecting the sign of $(\bfw^*)^\top\bfx+b^*$.

The binary-class SVM (B-SVM) has been generalized to multicategory classifications to tackle problems that have data points belonging to more than two classes. 
The initially proposed multi-class SVM (M-SVM) methods construct several binary classifiers, such as ``one-against-all' \cite{bottou1994comparison}, ``one-against-one'' \cite{knerr1990single}, and ``directed acyclic graph SVM'' \cite{platt2000large}. 
These M-SVMs may suffer from data imbalance, namely, some classes have much fewer data points than others, which can result in inaccurate predictions. One alternative is to put all the data points together in one model, 
which results in the so-called ``all-together'' M-SVMs. The ``all-together'' M-SVMs train multi-classifiers by considering a single large optimization problem. An extensive comparison of different M-SVMs can be found in \cite{hsu2002comparison}.  

In this paper, we consider both B-SVM and M-SVM. More precisely, we consider the binary-class huberized SVM (B-HSVM) (see \eqref{eq:main} below) for B-SVM and the ``all-together'' multi-class HSVM (M-HSVM) (see \eqref{eq:multihsvm} below) for M-SVM. The advantage of HSVM over classic SVM with hinge loss is the continuous differentiability of its loss function, which enables the use of the ``fastest'' first-order method: Proximal Gradient (PG) method \cite{nesterov2007gradient,BeckTeboulle2009} (see the overview in section \ref{sec:pg}). We demonstrate that the PG method is in general much faster than existing methods for (regularized) B-SVM and M-SVM while yielding comparable prediction accuracies. In addition, extensive numerical experiments are done to compare our method to state-of-the-art ones for both B-SVM and M-SVM on synthetic and also benchmark datasets. Statistical comparison is also performed to show the difference between the proposed method and other compared ones.

\subsection{Related work}
B-HSVM appears to be first considered\footnote{Strictly speaking, we add the term $\frac{\lambda_3}{2}b^2$ in \eqref{eq:main}. This modification is similar to that used in \cite{keerthi2006modified}, and the extra term usually does not affect the prediction but makes PG method converge faster.} by Wang \emph{et al} in \cite{wang2008hybrid}. They 
demonstrate that B-HSVM can perform better than the original unregularized SVM and also $\ell_1$-regularized SVM (i.e., \eqref{eq:elastic} with $\lambda_2=0$) for microarray classification. 

A closely related model to B-HSVM is the elastic net regularized SVM \cite{wang2006doubly}
\begin{equation}\label{eq:elastic}
\min_{b,\bfw}\frac{1}{n}\sum_{i=1}^n\left[1-y_i(b+\bfx_i^\top \bfw)\right]_++\lambda_1\|\bfw\|_1+\frac{\lambda_2}{2}\|\bfw\|_2^2,
\end{equation}
where $[1-t]_+=\max(0,1-t)$ is the hinge loss function. The $\ell_1$-norm regularizer has the ability to perform continuous shrinkage and automatic variable selection at the same time \cite{tibshirani1996shrinkageLasso}, and the $\ell_2$-norm regularizer can help to reduce the variance of the estimated coefficients and usually results in satisfactory prediction accuracy, especially in the case where there are many correlated variables. The elastic net inherits the benefits of both $\ell_1$ and $\ell_2$-norm regularizers and can perform better than either of them alone, as demonstrated in \cite{zou-hastie2005elastic-net}.

Note that \eqref{eq:elastic} uses non-differentiable hinge loss function while B-HSVM uses differentiable huberized hinge loss function. The differentiability makes B-HSVM relatively easier to solve. A path-following algorithm was proposed by Wang \emph{et. al} \cite{wang2008hybrid} for solving B-HSVM. Their algorithm is not efficient in particular for large-scale problems, since it needs to track the disappearance of variables along a regularization path. Recently, Yang and Zou \cite{yangefficient} proposed a Generalized Coordinate Descent (GCD) method, which was, in most cases, about 30 times faster than the path-following algorithm. However, the GCD method needs to compute the gradient of the loss function of B-HSVM after each coordinate update, which makes the algorithm slow. 

M-HSVM has been considered in \cite{li2010huberized}, which  generalizes the work \cite{wang2008hybrid} on B-HSVM to M-HSVM and also makes a path-following algorithm. 
However, their algorithm could be even worse since it also needs to track the disappearance of variables along a regularization path and M-HSVM often involves more variables than those of B-HSVM. Hence, it is not suitable for large-scale problems either.

Similar to M-HSVM, several other models have been proposed to train multiple classifiers by solving one single large optimization problem to handle the multi-category classification, such as the $\ell_1$-norm regularized M-SVM in \cite{wang20071}
and the $\ell_\infty$-norm regularized M-SVM in \cite{zhang2008variable}.
Again, these models use the non-differentiable hinge loss function and are relatively more difficult than M-HSVM to solve. There are also methods that use binary-classifiers to handle multicategory classification problems including ``one-against-all'' \cite{bottou1994comparison}, ``one-against-one'' \cite{knerr1990single}, and ``directed acyclic graph SVM'' \cite{platt2000large}. The work \cite{galar2011overview} makes a thorough review of methods using binary-classifiers for multi-category classification problems and gives extensive experiments on various applications. In a follow up and more recent paper \cite{Gal.et.al.15} the same authors present a dynamic classifier weighting method which deals with the limitations introduced by the non-competent classifiers in the one-versus-one classification strategy. In order to dynamically weigh the outputs of the individual classifiers, they use the nearest neighbor of every class from the instance that needs to be classified. Furthermore in \cite{Kra.et.al.14} the authors propose a new approach for building multi-class classifiers based on the notion of data-clustering in the feature space. For the derived clusters they construct one-class classifiers which can be combined to solve complex classification problems.

One key advantage of our algorithm is its scalability with the training sample size, and thus it is applicable for large-scale SVMs. While preparing this paper, we note that some other algorithms are also carefully designed for handling the large-scale SVMs, including the block coordinate Frank-Wolfe method \cite{lacoste2012block} and the stochastic alternating direction method of multiplier \cite{ouyang2013stochastic}. In addition, Graphics Processing Unit (GPU) computing has been utilized in \cite{li2013parallel} to run multiple training tasks in parallel to accelerate the cross validation procedure. Furthermore, different variants of SVMs have been proposed for specific applications such as the Value-at-Risk SVM for stability to outliers \cite{tsyurmasto2014value}, the structural twin SVM to contain prior domain knowledge \cite{qi2013structural}, the hierarchical SVM for customer churn prediction \cite{chen2012hierarchical} and the ellipsoidal SVM for outlier detection in wireless sensor networks \cite{zhang2013distributed}. Finally in \cite{CzaTab14} a two-ellipsoid kernel decomposition is proposed for the efficient training of SVMs. In order to avoid the use of SOCP techniques, introduced by the ellipsoids, the authors transform the data using a matrix which is determined from the sum of the classes' covariances. With that transformation it is possible to use classical SVM, and their method can be incorporated into existing SVM libraries.

\subsection{Contributions}
We develop an efficient PG method to solve the B-HSVM 
\begin{equation}\label{eq:main}
\min_{b,\bfw}\frac{1}{n}\sum_{i=1}^n\phi_H\left(y_i(b+\bfx_i^\top \bfw)\right)+\lambda_1\|\bfw\|_1+\frac{\lambda_2}{2}\|\bfw\|_2^2+\frac{\lambda_3}{2}b^2,
\end{equation}
and the ``all-together'' M-HSVM
\begin{equation}\label{eq:multihsvm}
\begin{array}{cl}
\underset{\bfb,\mbfw}{\min}& \frac{1}{n}\overset{n}{\underset{i=1}{\sum}}\overset{J}{\underset{j=1}{\sum}} a_{ij}\phi_H(b_j+\bfx_i^\top\bfw_j)+\lambda_1\|\mbfw\|_1\\
&\hspace{1cm}+\frac{\lambda_2}{2}\|\mbfw\|_F^2+\frac{\lambda_3}{2}\|\bfb\|^2,\\[0.1cm]
\text{ s.t.}& \mbfw \bfe = \mathbf{0}, \bfe^\top\bfb = 0.
\end{array}
\end{equation}
In \eqref{eq:main}, $y_i\in\{+1,-1\}$ is the label of $\bfx_i$, and 
$$\phi_H(t)=\left\{
\begin{array}{ll}
0, & \text{ for }t>1,\\
\frac{(1-t)^2}{2\delta}, &\text{ for }1-\delta<t\le 1,\\
1-t-\frac{\delta}{2}, & \text{ for } t\le 1-\delta,
\end{array}
\right.$$ 
is the huberized hinge loss function which is continuously differentiable.
In \eqref{eq:multihsvm}, 
$y_i\in\{1,\ldots,J\}$ is the $i$-th label, $\|\mbfw\|_1=\sum_{i,j}|w_{ij}|$, $a_{ij}=1$ if $y_i\neq j$ and $a_{ij}=0$ otherwise, $\bfe$ denotes the vector with all one's, and $\bfw_j$ denotes the $j$-th column of $\mbfw$. The constraints $\mbfw \bfe = \mathbf{0}$ and $\bfe^\top\bfb = 0$ in \eqref{eq:multihsvm} are imposed to eliminate redundancy in $\mbfw,\bfb$ and also are necessary to make the loss function $$\ell_M=\frac{1}{n}\overset{n}{\underset{i=1}{\sum}}\overset{J}{\underset{j=1}{\sum}} a_{ij}\phi_H(b_j+\bfx_i^\top\bfw_j)$$ Fisher-consistent \cite{lee2004multicategory}.

We choose the PG methodology because it requires only first-order information and converges fast. As shown in \cite{nesterov2007gradient,BeckTeboulle2009}, it is an optimal first-order method. Note that the objectives in \eqref{eq:main} and \eqref{eq:multihsvm} have non-smooth terms and are not differentiable. Hence, direct gradient or second-order methods such as the interior point method are not applicable.  

We speed up the algorithm by using a two-stage method, which detects the support of the solution at the first stage and solves a lower-dimensional problem at the second stage. For large-scale problems with sparse features, the two-stage method can achieve more than 5-fold acceleration. We also analyze the convergence of PG method under fairly general settings and get similar results as those in \cite{nesterov2007gradient,schmidt2011convergence}.

In addition, we compare the proposed method to state-of-the-art ones for B-SVM and M-SVM on both synthetic and benchmark datasets. Extensive numerical experiments demonstrate that our method can outperform other compared ones in most cases. Statistical tests are also performed to show significant differences between the compared methods.

\subsection{Structure of the paper}
The rest of the paper is organized as follows. In section \ref{sec:algorithm}, we overview the PG method and then apply it to \eqref{eq:main} and \eqref{eq:multihsvm}. 
In addition, assuming strong convexity, we show linear convergence of the PG method under fairly general settings. 
Numerical results are given in section \ref{sec:numerical}. Finally, section \ref{sec:conclusion} concludes the paper.

\section{Algorithms and convergence analysis}\label{sec:algorithm}
In this section, we first give an  overview of the PG method. Then we apply it to \eqref{eq:main} and \eqref{eq:multihsvm}.  We complete this section by showing that under strong convexity assumptions the PG method possesses linear convergence.

\subsection{Overview of the PG method}\label{sec:pg}
Consider convex composite optimization problems in the form of
\begin{equation}\label{eq:comp}
\min_{\bfu\in\mcu}\xi(\bfu)\equiv \xi_1(\bfu)+\xi_2(\bfu),
\end{equation} 
where $\mcu\subset\mbr^m$ is a convex set, $\xi_1$ is a differentiable convex function with Lipschitz continuous gradient (that is, there exists $L>0$ such that $\| \nabla\xi_1(\bar{\bfu}) -\nabla\xi_1(\tilde{\bfu}) \| \le L\| \bar{\bfu} - \tilde{\bfu}\|$, for all $\bar{\bfu}, \tilde{\bfu} \in \mcu$), and $\xi_2$ is a proper closed convex function.
The PG method for solving \eqref{eq:comp} iteratively updates the solution by
\begin{equation}\label{eq:pg}
\bfu^k=\arg\underset{\bfu\in\mcu}{\min}~Q(\bfu,\hat{\bfu}^{k-1})
\end{equation}
where 
$$\begin{array}{ll}
Q(\bfu,\hat{\bfu}^{k-1})=&\xi_1(\hat{\bfu}^{k-1})+\langle\nabla \xi_1(\hat{\bfu}^{k-1}),\bfu-\hat{\bfu}^{k-1}\rangle\\[0.1cm]
&+\frac{L_k}{2}\|\bfu-\hat{\bfu}^{k-1}\|^2+\xi_2(\bfu),
\end{array}$$
$L_k>0$ and $\hat{\bfu}^{k-1}={\bfu}^{k-1}+\omega_{k-1}({\bfu}^{k-1}-{\bfu}^{k-2})$ for some nonnegative $\omega_{k-1}\le 1$. The extrapolation technique can significantly accelerate the algorithm. 


When $L_k$ is the Lipschitz constant of $\nabla \xi_1$, it can easily be shown that $\xi(\bfu)\le Q(\bfu,\hat{\bfu}^{k-1})$, and thus this method is a kind of majorization minimization, as illustrated in Figure \ref{fig:pg}. With appropriate choice of $\omega_{k-1}$ and $L_k$, the sequence $\{\xi(\bfu^k)\}$ converges to the optimal value $\xi^*=\xi(\bfu^*)$. Nesterov \cite{nesterov2007gradient}, and Beck and Teboulle \cite{BeckTeboulle2009} showed, independently, that if $\omega_{k-1}\equiv 0$ and $L_k$ is taken as the Lipschitz constant of $\nabla \xi_1$, then $\{\xi(\bfu^k)\}$ converges to $\xi^*$ with the rate $O(1/k)$, namely,
$\xi(\bfu^k)-\xi(\bfu^\ast)\le O(1/k).$ In addition, using carefully designed $\omega_{k-1}$, they were able to show that the convergence rate can be increased to $O(1/k^2)$, which is the optimal rate of first-order methods for general convex problems \cite{NesterovConvexBook2004}.

\begin{figure}\caption{Simple illustration of PG method: $Q(u,z)$ is a majorization approximation of $\xi$ at $z$, which is an extrapolated point of $x$ and $y$. The new iterate $u^*$ is the minimizer of $Q$.}\label{fig:pg}
\centering
\includegraphics[width=0.4\textwidth]{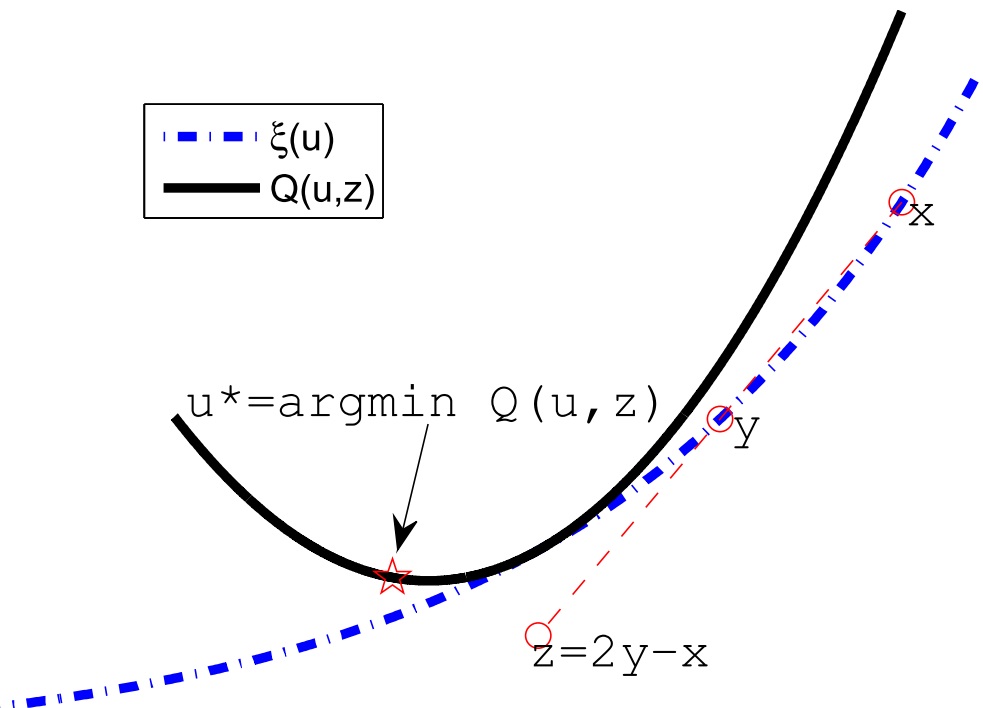}
\end{figure}


\subsection{PG method for binary-class HSVM}
We first write the B-HSVM problem \eqref{eq:main} into the general form of \eqref{eq:comp}. Let 
$$\left\{\begin{array}{l}
f_i(b,\bfw)=\phi_H\left(y_i(b+\bfx_i^\top \bfw)\right),\text{ for }i=1,\cdots,n,\\[0.2cm]
f(b,\bfw)=\frac{1}{n}\sum_{i=1}^nf_i(b,\bfw),\\[0.2cm]
g(b,\bfw)=\lambda_1\|\bfw\|_1+\frac{\lambda_2}{2}\|\bfw\|^2+\frac{\lambda_3}{2}b^2.
\end{array}\right.$$
Then \eqref{eq:main} can be written as
\begin{equation}\label{eq:eqmain}
\min_{b,\bfw}F(b,\bfw)\equiv f(b,\bfw)+g(b,\bfw).
\end{equation}
For convenience, we use the following notation in the rest of this section
\begin{equation}\label{notation_uv}\bfu=(b;\bfw),\qquad \bfv_i=(y_i;y_i\bfx_i).\end{equation}

\begin{proposition}\label{prop:f}
The function $f$ defined above is differentiable and convex, and its gradient $\nabla f$ is Lipschitz continuous with constant
\begin{equation}\label{lipconst}
L_f=\frac{1}{n\delta}\sum_{i=1}^n y_i^2(1+\|\bfx_i\|^2).
\end{equation}
\end{proposition}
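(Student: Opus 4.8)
The plan is to reduce everything to the scalar huberized hinge loss $\phi_H$ and then lift its properties through the affine substitution $t=\bfv_i^\top\bfu$, where $\bfu=(b;\bfw)$ and $\bfv_i=(y_i;y_i\bfx_i)$ as in \eqref{notation_uv}, so that each $f_i(\bfu)=\phi_H(\bfv_i^\top\bfu)$. First I would differentiate $\phi_H$ piecewise to obtain
$$\phi_H'(t)=\begin{cases}0,& t>1,\\ \frac{t-1}{\delta},& 1-\delta<t\le 1,\\ -1,& t\le 1-\delta,\end{cases}$$
and check that the one-sided values agree at the two breakpoints $t=1$ and $t=1-\delta$, which establishes that $\phi_H$ is $C^1$. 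Reading off $\phi_H'$, I would note that it is non-decreasing (it equals $-1$, then rises linearly to $0$, then stays at $0$), so $\phi_H$ is convex. Since $f_i$ is the composition of the convex $C^1$ function $\phi_H$ with the linear map $\bfu\mapsto\bfv_i^\top\bfu$, each $f_i$ is convex and $C^1$ with $\nabla f_i(\bfu)=\phi_H'(\bfv_i^\top\bfu)\,\bfv_i$; averaging over $i$ shows $f$ is convex and differentiable.

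For the Lipschitz bound, the key scalar fact is that $\phi_H'$ is globally Lipschitz with constant $1/\delta$: on the middle piece its slope is exactly $1/\delta$ and on the two outer pieces it is $0$, so $|\phi_H'(s)-\phi_H'(t)|\le\frac{1}{\delta}|s-t|$ for all $s,t$. Then for any $\bar{\bfu},\tilde{\bfu}$,
$$\begin{aligned}\|\nabla f_i(\bar{\bfu})-\nabla f_i(\tilde{\bfu})\|&=|\phi_H'(\bfv_i^\top\bar{\bfu})-\phi_H'(\bfv_i^\top\tilde{\bfu})|\,\|\bfv_i\|\\&\le\frac{1}{\delta}|\bfv_i^\top(\bar{\bfu}-\tilde{\bfu})|\,\|\bfv_i\|\le\frac{\|\bfv_i\|^2}{\delta}\|\bar{\bfu}-\tilde{\bfu}\|,\end{aligned}$$
where the last step is Cauchy--Schwarz. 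Computing $\|\bfv_i\|^2=y_i^2+\|y_i\bfx_i\|^2=y_i^2(1+\|\bfx_i\|^2)$ identifies the per-sample Lipschitz constant as $\frac{1}{\delta}y_i^2(1+\|\bfx_i\|^2)$.

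Finally I would assemble the pieces: since $\nabla f=\frac{1}{n}\sum_{i=1}^n\nabla f_i$, the triangle inequality gives $\|\nabla f(\bar{\bfu})-\nabla f(\tilde{\bfu})\|\le\frac{1}{n}\sum_{i=1}^n\frac{1}{\delta}y_i^2(1+\|\bfx_i\|^2)\,\|\bar{\bfu}-\tilde{\bfu}\|$, which is exactly the claimed constant \eqref{lipconst}. I do not expect a genuine obstacle: the only point needing care is the piecewise bookkeeping at the breakpoints $t=1$ and $t=1-\delta$, namely confirming both continuity of $\phi_H'$ (for the $C^1$ and convexity claims) and that the slope never exceeds $1/\delta$ (for the Lipschitz claim). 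Everything else is the standard argument of composing a scalar convex function with Lipschitz gradient with an affine map and averaging.
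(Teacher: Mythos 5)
Your proposal is correct and follows essentially the same route as the paper's own proof: compute $\phi_H'$ piecewise, use its $1/\delta$-Lipschitz continuity, apply the chain rule to get $\nabla f_i(\bfu)=\phi_H'(\bfv_i^\top\bfu)\bfv_i$, bound each term via Cauchy--Schwarz with $\|\bfv_i\|^2=y_i^2(1+\|\bfx_i\|^2)$, and average. The only cosmetic difference is that you justify convexity of $\phi_H$ via monotonicity of $\phi_H'$, whereas the paper takes convexity of $\phi_H$ as given and cites the composition-with-affine-map rule.
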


The proof of this proposition involves standard arguments and can be found in the appendix.

Now, we are ready to apply PG to \eqref{eq:main}. Define the \emph{proximal operator} for a given extended real-valued convex function $h(\bfu)$ on $\mbr^m$ by
$$\prox_h(\bfv)=\arg\min_\bfu\frac{1}{2}\|\bfu-\bfv\|^2+h(\bfu).$$ 
Replacing $\xi_1$ and $\xi_2$ in \eqref{eq:pg} with $f$ and $g$, respectively, we obtain
the update
\begin{equation}\label{eq:mainupdate}
\left\{\begin{array}{l}b^k=\frac{L_k\hat{b}^{k-1}-\nabla_b f(\hat{\bfu}^{k-1})}{L_k+\lambda_3},\\[0.2cm]
\bfw^k=\frac{1}{L_k+\lambda_2}\mcs_{\lambda_1}\big(L_k\hat{\bfw}^{k-1}-\nabla_\bfw f(\hat{\bfu}^{k-1})\big),
\end{array}\right.
\end{equation}
where $\mcs_\nu(\cdot)$ is a component-wise shrinkage operator defined by $\mcs_\nu(t)=\sign(t)\max(|t|-\nu,0)$.

In \eqref{eq:mainupdate}, we dynamically update $L_k$  by 
\begin{equation}\label{eq:updatel}
L_k=\min(\eta^{n_k}L_{k-1},L_f),
\end{equation} where $\eta>1$ is a pre-selected constant, $L_f$ is defined in \eqref{lipconst} and $n_k$ is the smallest integer such that the following condition is satisfied
\begin{equation}\label{eq:dynamicw}
\begin{array}{ll}
&f(\prox_{h_k}(\bfv^{k-1}))\\[0.1cm]
\le & f(\hat{\bfu}^{k-1})+\left\langle\nabla f(\hat{\bfu}^{k-1}), \prox_{h_k}(\bfv^{k-1})-\hat{\bfu}^{k-1}\right\rangle\\
&\hspace{0.2cm}+\frac{L_k}{2}\left\|\prox_{h_k}(\bfv^{k-1})-\hat{\bfu}^{k-1}\right\|^2 ,
\end{array}
\end{equation}
where $h_k(\bfu)=\frac{1}{L_k}g(\bfu)$, $\bfv^{k-1}=\hat{\bfu}^{k-1}-\frac{1}{L_k}\nabla f(\hat{\bfu}^{k-1})$ and $\hat{\bfu}^{k-1}=\bfu^{k-1}+\omega_{k-1}(\bfu^{k-1}-\bfu^{k-2})$ for some weight $\omega_{k-1}\le1$. The inequality in \eqref{eq:dynamicw} is necessary to make the algorithm convergent (see Lemma \ref{lem:key} in the appendix). It guarantees sufficient decrease, and the setting $L_k=L_f$ will make it satisfied. In the case where $L_f$ becomes unnecessarily large, a smaller $L_k$ can speed up the convergence. To make the overall objective non-increasing, we re-update $\bfu^k$ by setting $\hat{\bfu}^{k-1}=\bfu^{k-1}$ in \eqref{eq:mainupdate} whenever $F(\bfu^k)>F(\bfu^{k-1})$. \emph{This non-increasing monotonicity is very important, since we observe that the PG method may be numerically unstable without this modification. In addition, it significantly accelerates the PG method; see Table \ref{table:diffset} below.} 

Algorithm \ref{alg:mhhsvm}  summarizes our discussion. We name it as B-PGH.

\begin{algorithm}
\caption{Proximal gradient method for B-HSVM (B-PGH)}\label{alg:mhhsvm}
{\small
\begin{algorithmic}[1]
\STATE {\bf Input:} sample-label pairs $(\bfx_i,y_i), i=1,\cdots,n$; parameters $\lambda_1,\lambda_2,\lambda_3, \delta$.
\STATE {\bf Initialization:} choose $\bfu^0=(b^0;\bfw^0)$, $\bfu^{-1}=\bfu^0$; compute $L_f$ from \eqref{lipconst} and choose $\eta>1$ and $L_0\le L_f$; set $k=1$.
\WHILE{\emph{Not converged}}
\STATE Let $\hat{\bfu}^{k-1}=\bfu^{k-1}+\omega_{k-1}(\bfu^{k-1}-\bfu^{k-2})$ for some $\omega_{k-1}\le 1$;
\STATE Update $L_k$ according to \eqref{eq:updatel} and $\bfu^k$ according to \eqref{eq:mainupdate};
\IF{$F(\bfu^k)> F(\bfu^{k-1})$}
\STATE Re-update $\bfu^k$ according to \eqref{eq:mainupdate} with $\hat{\bfu}^{k-1}=\bfu^{k-1}$;
\ENDIF
\STATE Let $k=k+1$.
\ENDWHILE
\end{algorithmic}}
\end{algorithm}

\subsection{PG method for multi-class HSVM}
In this section we generalize the PG method for solving multi-class classification problems. 
 Denote $\mbfu=(\bfb;\mbfw).$ Let \begin{align*}G(\bfb,\mbfw)=&~\lambda_1\|\mbfw\|_1+\frac{\lambda_2}{2}\|\mbfw\|_F^2+\frac{\lambda_3}{2}\|\bfb\|^2,\\ \mcu=&~\{(\bfb,\mbfw):\mbfw\bfe=\mathbf{0},\bfe^\top\bfb=0\}.\end{align*} Then we can write \eqref{eq:multihsvm} as 
\begin{equation}\label{eq:multihsvm1}
\min_{\mbfu\in\mcu} H(\mbfu)\equiv\ell_M(\mbfu)+G(\mbfu)
\end{equation}
Similar to Proposition \ref{prop:f}, we can show that $\nabla_\mbfu \ell_M$ is Lipschitz continuous with constant
\begin{equation}\label{lip:multi}
L_{m}= \frac{J}{n\delta}\sum_{i=1}^n (1+\|\bfx_i\|^2).
\end{equation}
The proof is essentially the same as that of Proposition \ref{prop:f}, and we do not repeat it.

Now we are ready to apply the PG method to \eqref{eq:multihsvm} or equivalently \eqref{eq:multihsvm1}. We update $\mbfu$ iteratively by 
solving the $\bfb$-subproblem
\begin{equation}\label{b-sub}
\begin{array}{ll}
\bfb^k=&\underset{\bfe^\top\bfb=0}{\arg\min} \langle\nabla_{\bfb}\ell_M(\hat{\mbfu}^{k-1}),\bfb-\hat{\bfb}^{k-1}\rangle\\
&\hspace{1.5cm}+\frac{L_k}{2}\|\bfb-\hat{\bfb}^{k-1}\|^2+\frac{\lambda_3}{2}\|\bfb\|^2
\end{array}
\end{equation}
and $\mbfw$-subproblem
\begin{equation}\label{w-sub}
\begin{array}{ll}
\mbfw^k=&\underset{\mbfw\bfe=\mathbf{0}}{\arg\min} \langle\nabla_{\mbfw}\ell_M(\hat{\mbfu}^{k-1}),\mbfw-\hat{\mbfw}^{k-1}\rangle\\
&\hspace{0.5cm}+\frac{L_k}{2}\|\mbfw-\hat{\mbfw}^{k-1}\|^2+\lambda_1\|\mbfw\|_1+\frac{\lambda_2}{2}\|\mbfw\|^2,
\end{array}
\end{equation}
where $L_k$ is chosen in the same way as \eqref{eq:dynamicw}.

Problem \eqref{b-sub} is relatively easy and has a closed form solution. Let $\mbfp=\left[\begin{array}{c}\mbfi\\-\bfe^\top\end{array}\right]\in\mbr^{J\times (J-1)}$ and $\bar{\bfb}\in\mbr^{J-1}$ be the vector consisting of the first $J-1$ components of $\bfb$, where $\mbfi$ denotes the identity matrix. Substituting $\bfb=\mbfp\bar{\bfb}$ to \eqref{b-sub} gives 
the solution 
$$\bar{\bfb}^k=\frac{1}{\lambda_3+L_k}(\mbfp^\top\mbfp)^{-1}\mbfp^\top
\big(L_k\hat{\bfb}^{k-1}-\nabla_{\bfb}\ell_M(\hat{\mbfu}^{k-1})\big).$$
Hence, the update in \eqref{b-sub} can be explicitly written as
\begin{equation}\label{up-b}
\bfb^k=\mbfp\bar{\bfb}^k,
\end{equation}
where $\bar{\bfb}^k$ is defined in the above equation.

Problem \eqref{w-sub} can be further decomposed into $p$ independent small problems. Each of them involves only one row of $\mbfw$ and can be written in the following form 
\begin{equation}\label{p-subs}
\min_{\bfw}\frac{1}{2}\|\bfw-\bfz\|^2+\lambda\|\bfw\|_1,\text{ s.t. }\bfe^\top\bfw=0,
\end{equation}
where $\lambda=\frac{\lambda_1}{L_k+\lambda_2}$ and $\bfz^\top$ is the $i$-th row-vector of the matrix $\frac{L_k}{L_k+\lambda_2}\hat{\mbfw}^{k-1}-\frac{1}{L_k+\lambda_2}\nabla_{\mbfw}\ell_M(\hat{\mbfu}^{k-1})$ for the $i$-th small problem. The coexistence of the non-smooth term $\|\bfw\|_1$ and the equality constraint $\bfe^\top\bfw=0$ makes \eqref{p-subs} a difficult optimization problem to solve. Next we describe a new efficient yet simple method for solving \eqref{p-subs} using its dual problem, defined by
\begin{equation}\label{dual}
\begin{array}{ll}
\underset{\sigma}{\max} \>\> \gamma(\sigma)\equiv&\frac{1}{2}\|\mcs_\lambda(\bfz-\sigma)-\bfz\|^2+\lambda\|\mcs_\lambda(\bfz-\sigma)\|_1\\
&\hspace{1.2cm}+\sigma\bfe^\top\mcs_\lambda(\bfz-\sigma).
\end{array}
\end{equation}

Since \eqref{p-subs} is strongly convex, $\gamma(\sigma)$ is concave and continuously differentiable. Hence, the solution $\sigma^*$ of \eqref{dual} can be found by solving the single-variable equation $\gamma'(\sigma)=0$. It is easy to verify that $\gamma'(z_{\min}-\lambda)>0$ and $\gamma'(z_{\max}+\lambda)<0$, so the solution $\sigma^*$ lies between $z_{\min}-\lambda$ and $z_{\max}+\lambda$, where $z_{\min}$ and $z_{\max}$ respectively denote the minimum and maximum components of $\bfz$. In addition, note that $\mcs_\lambda(\bfz-\sigma)$ is piece-wise linear about $\sigma$, and the breakpoints are at $z_i\pm\lambda$, $i=1,\cdots,J$. Hence $\sigma^*$ must be in $[v_l,v_{l+1}]$ for some $l$, where $\bfv$ is the sorted vector of $(\bfz-\lambda;\bfz+\lambda)$ in the increasing order. Therefore, to solve \eqref{dual}, we first obtain $\bfv$, then search the interval that contains $\sigma^*$ by checking the sign of $\gamma'(\sigma)$ at the breakpoints, and finally solve $\gamma'(\sigma)=0$ within that interval. Algorithm \ref{alg:dual} summarizes our method for solving \eqref{dual}.

\begin{algorithm}\caption{Exact method for solving \eqref{dual}}\label{alg:dual}
{\small
\begin{algorithmic}[1]
\STATE {\bf Input:} $(\bfz,\lambda)$ with $\bfz$ in $J$-dimensional space and $\lambda>0$.
\STATE Let $\bfv=[\bfz-\lambda; \bfz+\lambda]\in\mbr^{2J}$ and sort $\bfv$ as $v_1\le v_2\le\cdots\le v_{2J}$; set $l=J$.
\WHILE{$\gamma'(v_l)\cdot\gamma'(v_{l+1})>0$}
\STATE If $\gamma'(v_l)>0$, set $l=l+1$; else $l=l-1$.
\ENDWHILE
\STATE Solve $\gamma'(\sigma)=0$ within $[v_l,v_{l+1}]$ and output the solution $\sigma^*$.
\end{algorithmic}
}
\end{algorithm}

After determining $\sigma^*$, we can obtain the solution of  \eqref{p-subs} by setting $\bfw^*=\mcs_\lambda(\bfz-\sigma^*)$.
Algorithm \ref{alg:multihsvm} summarizes the main steps of the PG method for efficiently solving \eqref{eq:multihsvm}. We name it as M-PGH.
\begin{algorithm}
\caption{Proximal gradient method for M-HSVM (M-PGH)}\label{alg:multihsvm}
{\small
\begin{algorithmic}[1]
\STATE {\bf Input:} sample-label pairs $(\bfx_i,y_i), i=1,\cdots,n$ with each $y_i\in\{1,\cdots,J\}$; parameters $\lambda_1,\lambda_2,\lambda_3, \delta$.
\STATE {\bf Initialization:} choose $\mbfu^0$, $\mbfu^{-1}=\mbfu^0$; compute $L_m$ in \eqref{lip:multi} and choose $\eta>1$ and $L_0\le L_m$; set $k=1$.
\WHILE{Not converged}
\STATE Let $\hat{\mbfu}^{k-1}=\mbfu^{k-1}+\omega_{k-1}(\mbfu^{k-1}-\bfu^{k-2})$ for some $\omega_{k-1}\le 1$;
\STATE Choose $L_k$ in the same way as \eqref{eq:updatel};
\STATE Update $\bfb^k$ by \eqref{up-b};
\FOR{$i=1,\cdots,p$}
\STATE Set $\bfz$ to the $i$-th row of
\STATE  $\frac{L_k}{L_k+\lambda_2}\hat{\mbfw}^{k-1}-\frac{1}{L_k+\lambda_2}\nabla_{\mbfw}\ell_M(\hat{\mbfu}^{k-1})$; 
\STATE Solve \eqref{dual} by Algorithm \ref{alg:dual} with input $(\bfz,\frac{\lambda_1}{L_k+\lambda_2})$ and let $\sigma^*$ be the output;
\STATE Set the $i$-th row of $\mbfw^k$ to be $\mcs_\lambda(\bfz-\sigma^*)$;
\ENDFOR
\IF{$H(\mbfu^k)> H(\mbfu^{k-1})$}
\STATE Re-update $\bfb^k$ and $\mbfw^k$ according to \eqref{up-b} and \eqref{w-sub} with $\hat{\mbfu}^{k-1}=\mbfu^{k-1}$;
\ENDIF
\STATE Let $k=k+1$.
\ENDWHILE
\end{algorithmic}}
\end{algorithm}

\subsection{Convergence results}
Instead of analyzing the convergence of Algorithms \ref{alg:mhhsvm} and \ref{alg:multihsvm}, we do the analysis of the PG method for \eqref{eq:comp} with general $\xi_1$ and $\xi_2$ and regard Algorithms \ref{alg:mhhsvm} and \ref{alg:multihsvm} as special cases. Throughout our analysis, we assume that $\xi_1$ is a differentiable convex function and $\nabla \xi_1$ is Lipschitz continuous with Lipschitz constant $L$. We also assume that $\xi_2$ is strongly convex\footnote{Without strong convexity of $\xi_2$, we only have sublinear convergence as shown in \cite{BeckTeboulle2009}.} with constant $\mu>0$, namely,
$$\xi_2(\bfu)-\xi_2(\bfv)\ge \langle \bfg_\bfv, \bfu-\bfv\rangle+\frac{\mu}{2}\|\bfu-\bfv\|^2,$$
for any $\bfg_\bfv\in\partial \xi_2(\bfv)$ and $\bfu,\bfv\in\text{dom}(\xi_2)$,
where $\text{dom}(\xi_2)=\{\bfu\in\mbr^m:\xi_2(\bfu)<\infty\}$ denotes the domain of $\xi_2$. 

Similar results have been shown by Nesterov \cite{nesterov2007gradient} and Schmidt {\em et al} \cite{schmidt2011convergence}. However, our analysis fits to more general settings. Specifically, we allow dynamical update of the Lipschitz parameter $L_k$ and an acceptable interval of the parameters $\omega_k$. On the other hand,  \cite{nesterov2007gradient,schmidt2011convergence} either require $L_k$ to be fixed to the Lipschitz constant of $\xi_1$ or require specific values for the $\omega_k$'s. In addition, neither of \cite{nesterov2007gradient,schmidt2011convergence} do the re-update step as in line 7 of Algorithm \ref{alg:mhhsvm} or line 13 of Algorithm \ref{alg:multihsvm}. 

We tested the PG method under different settings on synthetic datasets, generated in the same way as described in section \ref{sec:syn}. Our goal is to demonstrate the practical effect that each setting has on the overall performance of the PG method. Table \ref{table:diffset} summarizes the numerical results, which show that PG method under our settings converges significantly faster than that under the settings of  \cite{nesterov2007gradient,schmidt2011convergence}.

To analyze the PG method under fairly general settings, we use the so-called Kurdyka-{\L}ojasiewicz (KL) inequality, which has been widely applied in non-convex optimization. Our results show that the KL inequality can also be applied and simplify the analysis in convex optimization. Extending the discussion of the KL inequality is beyond the scope of this paper and therefore we refer the interested readers to \cite{xu2013bcd,lojasiewicz1993geometrie,kurdyka1998gradients,bolte2007lojasiewicz,xu-yin-nonconvex} and the references therein. 

\begin{table*}\caption{Performance of the PG method with different settings: {\cite{schmidt2011convergence}} sets $L_k= L_f$ for all $k$; neither of {\cite{nesterov2007gradient,schmidt2011convergence}} makes $F(\bfx_k)$ non-increasing. The data used in these tests is generated in the same way as described in section \ref{sec:syn}, and here we set the correlation parameter $\rho=0$.}\label{table:diffset}
{\footnotesize
\begin{center}
\begin{tabular}{|c||ccc|ccc|ccc|}
\hline
{Problems} & \multicolumn{3}{|c|}{Our settings} & \multicolumn{3}{|c|}{Settings of \cite{nesterov2007gradient}} & \multicolumn{3}{|c|}{Settings of \cite{schmidt2011convergence}}\\\hline
$(n,p,s)$ & \#iter & time & obj. & \#iter & time & obj. & \#iter & time & obj.\\\hline
(3000, 300, 30)      & 34    & 0.06    & 1.0178e-1    & 135 & 0.22   & 1.0178e-1   &     475      & 0.75       & 1.0178e-1    \\
(2000, 20000, 200)    & 91    & 4.34    & 8.0511e-2    & 461 & 21.77 & 8.0511e-2   &     2000    & 99.05    & 8.0511e-2    \\\hline
\end{tabular}
\end{center}}
\end{table*}


%

Our main result is summarized as follows.

\begin{theorem}\label{thm:finitesquare}
Let $\{\bfu^k\}$ be the sequence generated by \eqref{eq:pg} with $L_k\le L$ and $\hat{\bfu}^{k-1}=\bfu^{k-1}+\omega_{k-1}(\bfu^{k-1}-\bfu^{k-2})$ for some $\omega_{k-1}\le\sqrt{\frac{L_{k-1}}{L_{k}}}$ such that
\eqref{eq:condL} holds. In addition, we make $\{\xi(\bfu^k)\}$ nonincreasing by re-updating $\bfu^k$ with $\hat{\bfu}^{k-1}=\bfu^{k-1}$ in \eqref{eq:pg} whenever $\xi(\bfu^k)>\xi(\bfu^{k-1})$. Then 
$\bfu^k$ $R$-linearly converges to the unique minimizer $\bfu^*$ of \eqref{eq:comp}, namely, there exist positive constants $C$ and $\tau<1$ such that
\begin{equation}\label{eq:ratefista}
\|\bfu^k-\bfu^*\|\le C\tau^k,\ \forall k\ge0.
\end{equation}
\end{theorem}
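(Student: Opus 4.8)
The plan is to treat $\xi=\xi_1+\xi_2$ as a $\mu$-strongly convex objective (a convex function plus a $\mu$-strongly convex one, hence $\mu$-strongly convex with unique minimizer $\bfu^*$), and to prove R-linear convergence through the Kurdyka--{\L}ojasiewicz (KL) machinery. For a strongly convex function the KL inequality reduces to the Polyak--{\L}ojasiewicz estimate
$$\xi(\bfu)-\xi(\bfu^*)\le \tfrac{1}{2\mu}\,\mathrm{dist}\big(\mathbf{0},\,\partial\xi(\bfu)+N_\mcu(\bfu)\big)^2,\quad \bfu\in\mcu,$$
where $N_\mcu$ is the normal cone of $\mcu$ (this follows from strong convexity and $\langle\mathbf{n},\bfu^*-\bfu\rangle\le0$ for normal vectors). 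The strategy has three ingredients: a sufficient-decrease estimate for a Lyapunov energy, a relative-error bound controlling a subgradient of $\xi$ at $\bfu^k$ by successive differences, and the above inequality to close the loop. Once $r_k:=\xi(\bfu^k)-\xi(\bfu^*)$ is shown to vanish R-linearly, strong convexity gives $\tfrac{\mu}{2}\|\bfu^k-\bfu^*\|^2\le r_k$, which yields \eqref{eq:ratefista} with $\tau$ the appropriate root of the rate for $r_k$.

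First I would derive the sufficient decrease. Writing $\Delta_k=\|\bfu^k-\bfu^{k-1}\|$, the optimality of $\bfu^k$ for $Q(\cdot,\hat{\bfu}^{k-1})$ together with its $(L_k+\mu)$-strong convexity (the quadratic term contributes $L_k$, $\xi_2$ contributes $\mu$) gives $Q(\bfu^{k-1},\hat{\bfu}^{k-1})-Q(\bfu^k,\hat{\bfu}^{k-1})\ge \tfrac{L_k+\mu}{2}\Delta_k^2$. Using the descent guarantee \eqref{eq:condL} (equivalently Lemma \ref{lem:key}) to get $\xi(\bfu^k)\le Q(\bfu^k,\hat{\bfu}^{k-1})$, the convexity of $\xi_1$ to get $Q(\bfu^{k-1},\hat{\bfu}^{k-1})\le \xi(\bfu^{k-1})+\tfrac{L_k}{2}\|\bfu^{k-1}-\hat{\bfu}^{k-1}\|^2$, and $\|\bfu^{k-1}-\hat{\bfu}^{k-1}\|^2=\omega_{k-1}^2\Delta_{k-1}^2$, I obtain
$$\xi(\bfu^k)+\tfrac{L_k+\mu}{2}\Delta_k^2\le \xi(\bfu^{k-1})+\tfrac{L_k\omega_{k-1}^2}{2}\Delta_{k-1}^2.$$
The hypothesis $\omega_{k-1}\le\sqrt{L_{k-1}/L_k}$ gives $L_k\omega_{k-1}^2\le L_{k-1}$, so the energy $\Psi_k:=r_k+\tfrac{L_k}{2}\Delta_k^2$ obeys $\Psi_k+\tfrac{\mu}{2}\Delta_k^2\le\Psi_{k-1}$; in particular $\{\Psi_k\}$ is nonincreasing and $\Delta_k^2\le\tfrac{2}{\mu}(\Psi_{k-1}-\Psi_k)$.

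Second I would establish the relative-error bound. The first-order optimality condition for $\bfu^k$ produces $\mathbf{s}^k\in\partial\xi_2(\bfu^k)$ and $\mathbf{n}^k\in N_\mcu(\bfu^k)$ with $\mathbf{s}^k+\mathbf{n}^k=-\nabla\xi_1(\hat{\bfu}^{k-1})-L_k(\bfu^k-\hat{\bfu}^{k-1})$, so that $\bfg^k:=\nabla\xi_1(\bfu^k)+\mathbf{s}^k+\mathbf{n}^k\in\partial\xi(\bfu^k)+N_\mcu(\bfu^k)$ equals $\nabla\xi_1(\bfu^k)-\nabla\xi_1(\hat{\bfu}^{k-1})-L_k(\bfu^k-\hat{\bfu}^{k-1})$. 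The $L$-Lipschitz continuity of $\nabla\xi_1$ and $L_k\le L$ give $\|\bfg^k\|\le 2L\|\bfu^k-\hat{\bfu}^{k-1}\|\le 2L(\Delta_k+\Delta_{k-1})$, using $\omega_{k-1}\le1$. Feeding this into the KL inequality yields $r_k\le\tfrac{1}{2\mu}\|\bfg^k\|^2\le\tfrac{4L^2}{\mu}(\Delta_k^2+\Delta_{k-1}^2)$, whence $\Psi_k\le C_1(\Delta_k^2+\Delta_{k-1}^2)$ for an explicit $C_1$. Combining with $\Delta_k^2+\Delta_{k-1}^2\le\tfrac{2}{\mu}(\Psi_{k-2}-\Psi_k)$ from the first step produces the two-step contraction $(1+C_2)\Psi_k\le C_2\Psi_{k-2}$ with $C_2=2C_1/\mu$, i.e. $\Psi_k\le\rho\,\Psi_{k-2}$ for $\rho=C_2/(1+C_2)<1$. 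Hence $\Psi_k\le C\tau^{2k}$ for a suitable $\tau<1$, so $r_k\le C\tau^{2k}$, and transferring through strong convexity gives \eqref{eq:ratefista}.

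I expect the main obstacle to be the bookkeeping forced by the extrapolation. The momentum step makes the natural decrease inequality carry a positive $\omega_{k-1}^2\Delta_{k-1}^2$ term, so a plain one-step contraction of $r_k$ is unavailable; it is precisely $\omega_{k-1}\le\sqrt{L_{k-1}/L_k}$ that lets this term be absorbed into the telescoping energy $\Psi_k$, while the simultaneous appearance of $\Delta_k$ and $\Delta_{k-1}$ in both the descent and the relative-error estimates is what forces a two-step rather than one-step recursion. Care is also needed so that the dynamically varying $L_k$ never breaks the chain of inequalities---this is where $L_k\le L$ and the rule \eqref{eq:updatel} enter---and to confirm that the monotonicity re-update only ever lowers $\Psi_k$, so that it is compatible with, rather than disruptive of, the Lyapunov argument.
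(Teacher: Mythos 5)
Your proposal is correct, and its three ingredients are exactly the paper's: your sufficient-decrease inequality is precisely \eqref{eq:diff} (the paper obtains it by setting $\bfu=\bfu^{k-1}$ in Lemma \ref{lem:key}, you obtain it from optimality plus $(L_k+\mu)$-strong convexity of $Q(\cdot,\hat{\bfu}^{k-1})$ — equivalent derivations); your subgradient $\bfg^k=\nabla\xi_1(\bfu^k)-\nabla\xi_1(\hat{\bfu}^{k-1})-L_k(\bfu^k-\hat{\bfu}^{k-1})$ with $\|\bfg^k\|\le 2L(\Delta_k+\Delta_{k-1})$ is the paper's \eqref{temp2}; and your Polyak--{\L}ojasiewicz estimate is the paper's Lemma \ref{lem:KL}, with the sharper constant $1/(2\mu)$ and an explicit normal-cone term for $\mcu$ that the paper silently absorbs into $\xi_2$. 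Where you genuinely depart is the final assembly. The paper runs the standard KL finite-length argument: it first proves global convergence, then uses the concavity trick $\sqrt{\xi_k}-\sqrt{\xi_{k+1}}\ge(\xi_k-\xi_{k+1})/(2\sqrt{\xi_k})$ together with Young's inequality to show that the tail sums $S_m=\sum_{k\ge m}\|\bfu^k-\bfu^{k+1}\|$ obey the two-step contraction $S_m\le C_3(S_{m-2}-S_m)$, and concludes via $\|\bfu^m-\bfu^*\|\le S_m$. You instead contract the Lyapunov energy $\Psi_k=\xi(\bfu^k)-\xi(\bfu^*)+\frac{L_k}{2}\Delta_k^2$: the decrease step gives $\Delta_k^2+\Delta_{k-1}^2\le\frac{2}{\mu}(\Psi_{k-2}-\Psi_k)$, the PL step gives $\Psi_k\le C_1(\Delta_k^2+\Delta_{k-1}^2)$, and combining the two yields $\Psi_k\le\rho\Psi_{k-2}$ with $\rho<1$, after which strong convexity converts function values into distances. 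Your route is shorter and more self-contained in this strongly convex setting: it needs neither the paper's separate global-convergence section nor the square-root manipulation, since existence of $\bfu^*$ and the rate both fall out of one recursion. What the paper's route buys is generality — the finite-length/desingularization template survives when strong convexity is weakened to a general KL inequality with exponent different from $1/2$, and it additionally proves summability of the step lengths, a stronger statement about the trajectory than a rate on $\|\bfu^k-\bfu^*\|$ alone. One small point to keep honest: your bound $\|\bfu^k-\hat{\bfu}^{k-1}\|\le\Delta_k+\Delta_{k-1}$ uses $\omega_{k-1}\le 1$, which is the standing assumption of Section \ref{sec:pg} rather than a consequence of $\omega_{k-1}\le\sqrt{L_{k-1}/L_k}$; the paper's own derivation of \eqref{temp2} leans on it in exactly the same way, so this is not a gap relative to the paper, but it deserves to be stated explicitly.
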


%

The proof of this theorem is given in the Appendix due to its technical complexity. Using the results of Theorem \ref{thm:finitesquare}, we establish the convergence results of Algorithms \ref{alg:mhhsvm} and \ref{alg:multihsvm} in the following corollary.
\begin{corollary}
Let $\{\bfu^k\}$ and $\{\mbfu^k\}$ be the sequences generated by Algorithms \ref{alg:mhhsvm} and \ref{alg:multihsvm} with $\lambda_2,\lambda_3>0$ and $\omega_{k-1}\le\sqrt{\frac{L_{k-1}}{L_{k}}}$. Then $\{\bfu^k\}$ and $\{\mbfu^k\}$ $R$-linearly converge to the unique solutions of \eqref{eq:main} and \eqref{eq:multihsvm}, respectively.
\end{corollary}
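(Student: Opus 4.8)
The plan is to prove $R$-linear convergence by constructing a Lyapunov (energy) function that is monotonically decreasing and contracts linearly every two iterations; the strong convexity of $\xi_2$ enters both to guarantee the unique minimizer $\bfu^*$ and to supply a Polyak--{\L}ojasiewicz-type inequality, which is exactly the Kurdyka--{\L}ojasiewicz inequality with exponent $\frac12$. I would organize the argument around three ingredients: a \emph{sufficient-decrease} estimate, a \emph{subgradient (relative-error) bound}, and the \emph{PL inequality}, then combine them.

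First, for the sufficient decrease, I would exploit that the subproblem $Q(\cdot,\hat\bfu^{k-1})$ is $(L_k+\mu)$-strongly convex (the $\frac{L_k}{2}\|\cdot\|^2$ term plus the $\mu$-strongly convex $\xi_2$). Testing its optimality at $\bfu=\bfu^{k-1}$ gives $Q(\bfu^{k-1},\hat\bfu^{k-1})\ge Q(\bfu^k,\hat\bfu^{k-1})+\frac{L_k+\mu}{2}\|\bfu^k-\bfu^{k-1}\|^2$. The descent condition \eqref{eq:condL} (the general form of \eqref{eq:dynamicw}) yields $\xi(\bfu^k)\le Q(\bfu^k,\hat\bfu^{k-1})$, while convexity of $\xi_1$ bounds $Q(\bfu^{k-1},\hat\bfu^{k-1})\le \xi(\bfu^{k-1})+\frac{L_k}{2}\|\hat\bfu^{k-1}-\bfu^{k-1}\|^2$. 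Since $\hat\bfu^{k-1}-\bfu^{k-1}=\omega_{k-1}(\bfu^{k-1}-\bfu^{k-2})$ and $\omega_{k-1}^2\le L_{k-1}/L_k$, the extrapolation term is controlled by $L_k\omega_{k-1}^2\le L_{k-1}$. Putting these together and defining the energy $\Phi_k=\xi(\bfu^k)-\xi^*+\frac{L_k}{2}\|\bfu^k-\bfu^{k-1}\|^2$, I obtain the one-step decrease $\Phi_k\le\Phi_{k-1}-\frac{\mu}{2}\|\bfu^k-\bfu^{k-1}\|^2$; the re-update in the theorem only reinforces this, since it corresponds to $\omega_{k-1}=0$ and guarantees that $\{\xi(\bfu^k)\}$, hence $\{\Phi_k\}$, is nonincreasing.

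Second, I would read off the optimality condition of \eqref{eq:pg}, namely $0\in\nabla\xi_1(\hat\bfu^{k-1})+L_k(\bfu^k-\hat\bfu^{k-1})+\partial\xi_2(\bfu^k)+N_\mcu(\bfu^k)$, and rearrange it to exhibit an element $\bfd^k\in\partial(\xi+\iota_\mcu)(\bfu^k)$ equal to $\nabla\xi_1(\bfu^k)-\nabla\xi_1(\hat\bfu^{k-1})-L_k(\bfu^k-\hat\bfu^{k-1})$. Using $L$-Lipschitzness of $\nabla\xi_1$, $L_k\le L$, $\omega_{k-1}\le1$, and the triangle inequality, this gives $\|\bfd^k\|\le 2L\big(\|\bfu^k-\bfu^{k-1}\|+\|\bfu^{k-1}-\bfu^{k-2}\|\big)$. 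Because $\xi+\iota_\mcu$ is $\mu$-strongly convex with unique minimizer $\bfu^*$, the PL inequality $\xi(\bfu^k)-\xi^*\le\frac{1}{2\mu}\|\bfd^k\|^2$ holds, whence $\xi(\bfu^k)-\xi^*\lesssim \|\bfu^k-\bfu^{k-1}\|^2+\|\bfu^{k-1}-\bfu^{k-2}\|^2$.

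Finally, I would combine the two. The PL bound together with $L_k\le L$ gives $\Phi_k\le C_1\big(\|\bfu^k-\bfu^{k-1}\|^2+\|\bfu^{k-1}-\bfu^{k-2}\|^2\big)$, while the sufficient-decrease estimate gives $\frac{\mu}{2}\|\bfu^j-\bfu^{j-1}\|^2\le\Phi_{j-1}-\Phi_j$. Substituting $j=k,k-1$ yields $\Phi_k\le C_2(\Phi_{k-2}-\Phi_k)$, i.e. a two-step contraction $\Phi_k\le\theta\,\Phi_{k-2}$ with $\theta=\frac{C_2}{1+C_2}<1$; monotonicity of $\{\Phi_k\}$ then propagates this to all indices, giving geometric decay $\Phi_k\le\tilde C\,\tilde\tau^{\,k}$ with $\tilde\tau=\sqrt{\theta}<1$. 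Strong convexity converts this to the claim via $\frac{\mu}{2}\|\bfu^k-\bfu^*\|^2\le\xi(\bfu^k)-\xi^*\le\Phi_k$, so that $\|\bfu^k-\bfu^*\|\le C\tau^k$ with $\tau=\sqrt{\tilde\tau}$, which is \eqref{eq:ratefista}. The main obstacle, and the place where the hypotheses are genuinely used, is the sufficient-decrease step: the extrapolation injects a $+\frac{L_k\omega_{k-1}^2}{2}\|\bfu^{k-1}-\bfu^{k-2}\|^2$ term that would otherwise destroy monotonicity, and it is precisely the condition $\omega_{k-1}\le\sqrt{L_{k-1}/L_k}$ that lets this term telescope into the energy $\Phi_k$. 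Keeping the bookkeeping consistent while $L_k$ varies with $k$ (rather than being fixed at $L$, as in \cite{nesterov2007gradient,schmidt2011convergence}) is the delicate part, and is the reason the contraction is naturally two-step rather than one-step.
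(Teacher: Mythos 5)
Your proof is correct, and it reaches the conclusion by a genuinely different route than the paper. The paper proves the corollary by invoking Theorem \ref{thm:finitesquare}, whose proof shares your three ingredients --- the sufficient-decrease inequality (the paper's \eqref{eq:diff}, obtained from Lemma \ref{lem:key} with $\bfu=\bfu^{k-1}$, which is exactly your estimate since $L_k\omega_{k-1}^2\le L_{k-1}$), the subgradient bound from the optimality condition plus Lipschitz continuity of $\nabla\xi_1$ (the paper's \eqref{temp2}), and the strong-convexity/KL inequality (Lemma \ref{lem:KL}) --- but combines them in ``length space'' rather than your ``energy space.'' Specifically, the paper first runs a separate global-convergence argument (telescoping \eqref{eq:diff} to get $\bfu^k-\bfu^{k-1}\to\mathbf{0}$, then a limit-point argument to get $\bfu^k\to\bfu^*$), and only then derives the rate by contracting the tail sums $S_m=\sum_{k\ge m}\|\bfu^k-\bfu^{k+1}\|$: it uses the square-root form of KL, the concavity estimate $\sqrt{\xi_k}-\sqrt{\xi_{k+1}}\ge\frac{\xi_k-\xi_{k+1}}{2\sqrt{\xi_k}}$, and a Young-type splitting with an auxiliary parameter $\delta\in\bigl(0,\tfrac{1}{2}(\sqrt{L+\mu}-\sqrt{L})\bigr)$ to arrive at $S_m\le C_3(S_{m-2}-S_m)$, concluding via $\|\bfu^m-\bfu^*\|\le S_m$. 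Your Lyapunov function $\Phi_k=\xi(\bfu^k)-\xi^*+\frac{L_k}{2}\|\bfu^k-\bfu^{k-1}\|^2$ with the squared PL inequality yields the same two-step contraction directly at the level of function values, which is more self-contained (no prior global-convergence step is needed, since summability of the iterate movements never enters, and no square-root bookkeeping or $\delta$-trick), at the cost of needing strong convexity once more at the very end to convert $\Phi_k\le\tilde C\tilde\tau^k$ into \eqref{eq:ratefista}; the paper's length-space argument is the one that generalizes to settings where that last conversion is unavailable (non-convex problems or KL exponents other than $\tfrac12$), which is presumably why the authors chose it. Both arguments hinge on the hypothesis $\omega_{k-1}\le\sqrt{L_{k-1}/L_k}$ in exactly the same place, namely to telescope the extrapolation term into the preceding step's movement. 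One small item you leave implicit: to get the corollary itself one must check that Algorithms \ref{alg:mhhsvm} and \ref{alg:multihsvm} satisfy the general hypotheses --- $\nabla f$ and $\nabla\ell_M$ are Lipschitz by Proposition \ref{prop:f} and \eqref{lip:multi}, the line search \eqref{eq:updatel}--\eqref{eq:dynamicw} enforces \eqref{eq:condL} with $L_k\le L$, and $\lambda_2,\lambda_3>0$ make $g$ (respectively $G+\iota_{\mcu}$, with $\mcu$ the linear constraint set, whose normal cone you do account for) strongly convex with modulus $\min(\lambda_2,\lambda_3)$ --- but this verification is routine and your framework accommodates it.
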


\begin{remark}
If one of $\lambda_2$ and $\lambda_3$ vanishes, we only have sublinear convergence by some appropriate choice of $\omega_k$. The results can be found in {\cite{BeckTeboulle2009}}.
\end{remark}


\subsection{Two-stage accelerated method for large-scale problems} \label{sec:2-stage-accelerated-method}
Most cost of Algorithm \ref{alg:mhhsvm} at iteration $k$ occurs in the computation of $\nabla f({\hat{\bfu}^{k-1}})$ and the evaluation of $F(\bfu^k)$. Let $\mbfv=(\bfv_1,\cdots,\bfv_n)^\top$ where $\bfv_1,\cdots,\bfv_n$ are defined in \eqref{notation_uv}. To compute $\nabla f({\hat{\bfu}^{k-1}})$, we need to have $\mbfv\hat{\bfu}^{k-1}$ and $\nabla f_i(\hat{\bfu}^{k-1}), i=1,\cdots,n$, which costs $O(np)$ in total. Evaluating $F(\bfu^k)$ needs $\mbfv\bfu^k$ which costs $O(np)$. To save the computing time, we store both $\mbfv\bfu^{k-1}$ and $\mbfv\bfu^{k}$ so that $\mbfv\hat{\bfu}^k$ can be obtained by $\mbfv\hat{\bfu}^k=\mbfv\bfu^{k}+\omega_k(\mbfv\bfu^{k}-\mbfv\bfu^{k-1})$. This way, we only need to compute $\mbfv\bfu^k$ once during each iteration, and the total computational cost is $O(Tnp)$ where $T$ is the total number of iterations.

As $p$ is large and the solution of \eqref{eq:main} is sparse, namely, only a few features are relevant, we can further reduce the cost of Algorithm \ref{alg:mhhsvm} by switching from the original high-dimensional problem to a lower-dimensional one. More precisely, we first run Algorithm \ref{alg:mhhsvm} with $L_k=L_f$ and $\omega_k=0$ until the support of $\bfw^k$ remains almost unchanged. Then we reduce \eqref{eq:main} to a lower-dimensional problem by confining $\bfw$ to the detected support, namely, all elements out of the detected support are kept \emph{zero}. Finally, Algorithm \ref{alg:mhhsvm} is employed again to solve the lower-dimensional problem. The two-stage method for \eqref{eq:main} is named as B-PGH-2, and its solidness rests on the following lemma, which can be shown in a similar way as the proof of Lemma 5.2 in \cite{HaleYinZhang2007}.

\begin{lemma}
Let $\{(b^k,\bfw^k)\}$ be the sequence generated by Algorithm \ref{alg:mhhsvm} with $L_k=L_f$ and $\omega_k=0$ starting from any $\bfu^0=(b^0,\bfw^0)$. Suppose $\bfu^*=(b^*,\bfw^*)$ is the unique solution of \eqref{eq:main} with $\lambda_3>0$. Let $$\begin{array}{l}Q(b,\bfw)=f(b,\bfw)+\frac{\lambda_2}{2}\|\bfw\|^2+\frac{\lambda_3}{2}b^2,\\[0.1cm]
 h_i(\bfw)=w_i-\frac{1}{L_f+\lambda_2}\nabla_{w_i}Q(\bfu),\end{array}$$
and
$$\begin{array}{l}\mci=\{i:|\nabla_{w_i}Q(\bfu^*)|<\lambda_1\},\
 \mce=\{i:|\nabla_{w_i}Q(\bfu^*)|=\lambda_1\},\end{array}$$
where $w_i$ is the $i$th component of $\bfw$.
Then $\supp(\bfw^*)\subset\mce$ and $w^*_i=0, \forall i\in\mci$, where $\supp(\bfw^*)$ denotes the support of $\bfw^*$. In addition, $w_i^k=0,\forall i\in \mci$ and $\sign(h_i(\bfw^k))=\sign(h_i(\bfw^*)), \forall i\in \mce$ for all but at most finite iterations.
\end{lemma}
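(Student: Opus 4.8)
The plan is to recast the proximal-gradient update with $L_k=L_f$ and $\omega_k=0$ as a fixed-point iteration of a shrinkage map, and then read off the support-identification properties from the separation of $|h_i(\bfw^*)|$ against the threshold $\nu:=\frac{\lambda_1}{L_f+\lambda_2}$. First I would substitute $\nabla_{w_i}f=\nabla_{w_i}Q-\lambda_2 w_i$ into the $\bfw$-update \eqref{eq:mainupdate} and use the positive-scaling identity $\mcs_{\lambda_1}\big((L_f+\lambda_2)t\big)=(L_f+\lambda_2)\mcs_{\nu}(t)$ to rewrite the iteration component-wise as $w_i^k=\mcs_{\nu}\big(h_i(\bfw^{k-1})\big)$. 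The unique minimizer $\bfw^*$ then satisfies the fixed-point relation $w_i^*=\mcs_{\nu}\big(h_i(\bfw^*)\big)$, which is exactly the first-order optimality condition of \eqref{eq:main} written with step $\frac{1}{L_f+\lambda_2}$.

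The second step is to translate this fixed-point relation into the stated characterization. From $w_i^*=\mcs_\nu(h_i(\bfw^*))$ one sees that $|h_i(\bfw^*)|\le\nu$ forces $w_i^*=0$, in which case $h_i(\bfw^*)=-\frac{1}{L_f+\lambda_2}\nabla_{w_i}Q(\bfu^*)$ and hence $|\nabla_{w_i}Q(\bfu^*)|=(L_f+\lambda_2)|h_i(\bfw^*)|$; conversely $|h_i(\bfw^*)|>\nu$ gives $w_i^*\ne0$ and $\nabla_{w_i}Q(\bfu^*)=-\lambda_1\sign(w_i^*)$. Matching these against the definitions of $\mci$ and $\mce$ yields $i\in\mci\iff|h_i(\bfw^*)|<\nu$ and $\supp(\bfw^*)=\{i:|h_i(\bfw^*)|>\nu\}\subset\mce=\{i:|h_i(\bfw^*)|\ge\nu\}$, which is precisely the first assertion, and in particular $w_i^*=0$ for every $i\in\mci$.

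For the finite-identification claims I would invoke convergence of the iterates, $\bfw^k\to\bfw^*$ (which holds for the basic proximal-gradient scheme used here and is made quantitative by Theorem \ref{thm:finitesquare} when $\lambda_2,\lambda_3>0$), together with continuity of each $h_i$, a consequence of the Lipschitz continuity of $\nabla f$ from Proposition \ref{prop:f}. Thus $h_i(\bfw^{k-1})\to h_i(\bfw^*)$. For $i\in\mci$ the strict inequality $|h_i(\bfw^*)|<\nu$ is preserved for all large $k$, so $w_i^k=\mcs_\nu(h_i(\bfw^{k-1}))=0$; for $i\in\mce$ we have $h_i(\bfw^*)\ne0$ (since $|h_i(\bfw^*)|\ge\nu>0$), so the sign of $h_i(\bfw^{k-1})$ locks onto that of $h_i(\bfw^*)$ for all large $k$. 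Because $\mci$ and $\mce$ are finite, taking the largest of the finitely many thresholds gives a single iteration index beyond which both statements hold simultaneously.

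I expect the main obstacle to be twofold. First, one must justify convergence of the iterate sequence itself, not merely of the objective values, to the unique minimizer; this is where the assumed uniqueness of $\bfu^*$ and the convexity/convergence theory enter. Second, care is needed at the degenerate indices $i\in\mce$ with $w_i^*=0$, where $|h_i(\bfw^*)|=\nu$ exactly: here $w_i^k$ need not settle to zero and may straddle the shrinkage threshold, which is precisely why the second finite-identification claim is phrased in terms of the sign of $h_i$ rather than the support of $\bfw^k$. Since $h_i(\bfw^*)=\pm\nu\ne0$ at these indices, the sign statement nevertheless survives. The remaining estimates are the routine continuity and limit arguments that parallel the proof of Lemma 5.2 in \cite{HaleYinZhang2007}.
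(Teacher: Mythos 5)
Your proposal is correct and takes essentially the approach the paper intends: the paper gives no proof of its own, deferring to Lemma 5.2 of \cite{HaleYinZhang2007}, whose argument is exactly your recasting of the update as the shrinkage fixed-point iteration $w_i^k=\mcs_\nu\big(h_i(\bfw^{k-1})\big)$ with $\nu=\frac{\lambda_1}{L_f+\lambda_2}$, followed by finite identification from the strict/non-strict separation of $|h_i(\bfw^*)|$ against $\nu$ under iterate convergence. Your writeup correctly supplies the details (the scaling identity for $\mcs$, the equivalence of the fixed-point relation with the optimality conditions, and the continuity/convergence step) that the paper's citation leaves implicit.
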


Suppose the cardinality of the solution support is $s$. Then the total computational cost of the two-stage method B-PGH-2 is $O(T_1np+T_2ns)$, where $T_1,T_2$ are the numbers of iterations in the first and second stages, respectively. Numerically, we found that $\supp(\bfw^*)$ could be detected in several iterations, namely, $T_1$ is usually small. When $s\ll p$, B-PGH-2 can be significantly faster than B-PGH, as demonstrated by our experiments in Section \ref{sec:numerical}. In the same way, Algorithm \ref{alg:multihsvm} can be accelerated by a two-stage method. We omit the analysis since it can be derived by following the same steps.

\section{Numerical Experiments}\label{sec:numerical}
In the first part of this section, we compare  B-PGH, described in Algorithm \ref{alg:mhhsvm}, with two very recent binary-class SVM solvers using ADMM \cite{ye2011efficient} and GCD\footnote{Our algorithm and ADMM are both implemented in MATLAB, while the code of GCD is written in R. To be fair, we re-wrote the code of GCD and implemented it in MATLAB.} \cite{yangefficient} on both synthetic and real data. ADMM solves model \eqref{eq:elastic} whereas both B-PGH and GCD solve model  \eqref{eq:main}. 
In the second part, we compare the performance of five different multi-class SVMs which are: the model defined by \eqref{eq:multihsvm}, the $\ell_1$-regularized M-SVM in \cite{wang20071}, the $\ell_\infty$-regularized M-SVM in \cite{zhang2008variable}, the ``one-vs-all'' (OVA) method \cite{bottou1994comparison}, and the Decision Directed Acyclic Graph (DDAG) method \cite{platt2000large}. We use M-PGH\footnote{The paper \cite{li2010huberized} uses a path-following method to solve \eqref{eq:multihsvm}. However, its code is not publicly available.}, described in Algorithm \ref{alg:multihsvm}, to solve \eqref{eq:multihsvm} and Sedumi \cite{sturm1999using} for the $\ell_1$ and $\ell_\infty$-regularized M-SVMs. Sedumi is called through CVX \cite{grant2008cvx}. All the tests were performed on a computer having an i7-620m CPU and 3-GB RAM and running 32-bit Windows 7 and Matlab 2010b.

\subsection{Binary-class SVM}
For B-PGH, the parameters $\eta$ and $L_0$ were set to $\eta=1.5$ and $L_0=\frac{2L_f}{n}$, respectively. We chose $$\omega_{k-1}=\min\left(\frac{t_{k-1}-1}{t_k},\sqrt{\frac{L_{k-1}}{L_k}}\right),$$ where $t_0=1$ and $t_k=\frac{1}{2}\left(1+\sqrt{1+4t_{k-1}^2}\right)$. The starting point was chosen to be a \emph{zero} vector. We stop B-PGH if both of the following conditions are satisfied during three consecutive iterations
\begin{equation}\label{eq:stop}
\frac{F_{k-1}-F_k}{1+F_{k-1}}\le tol,\qquad
\frac{\|\bfu^{k-1}-\bfu^k\|}{1+\|\bfu^{k-1}\|}\le tol,
\end{equation}
where $\bfu^k=(b^k;\bfw^k)$ and $F_k=F(\bfu^k)$. The stopping tolerance was set to $tol=10^{-6}$ for B-PGH and GCD and $tol=10^{-5}$ for ADMM since $tol=10^{-6}$ was too stringent. For B-PGH-2, we took $tol=10^{-3}$ at the first stage and $tol = 10^{-6}$ at the second stage. The penalty parameters for ADMM were set to $\mu_1=\frac{100}{n}$ and $\mu_2=50$ as suggested in \cite{ye2011efficient}.
All the other parameters for ADMM and GCD were set to their default values. 

\subsubsection{Synthetic data}\label{sec:syn}
We generated $n$ samples in $\mbr^p$ with one half in the ``$+1$'' class and the other half in the ``$-1$'' class. Specifically, each sample in the ``$+1$'' class was generated according to the Gaussian distribution $\mathcal{N}(\bm{\mu},\mathbf{\Sigma})$ and each sample in the ``$-1$'' class according to $\mathcal{N}(-\bm{\mu},\mathbf{\Sigma})$. The mean vector 
$$\bm{\mu}=(\underset{s}{\underbrace{1,\cdots,1}},\underset{p-s}{\underbrace{0,\cdots,0}})^\top,$$
and the covariance matrix
$$\mathbf{\Sigma}=\begin{bmatrix}
\rho\mathbf{1}_{s\times s}+(1-\rho)\mathbf{I}_{s\times s} & \mathbf{0}_{s\times (p-s)}\\
\mathbf{0}_{(p-s)\times s} & \mathbf{I}_{(p-s)\times (p-s)}
\end{bmatrix},$$
where $\mathbf{1}_{s\times s}$ is the matrix of size $s\times s$ with all \emph{one}'s, $\mathbf{0}$ is the matrix with all \emph{zero}'s, and $\mathbf{I}_{s\times s}$ is an identity matrix of size $s\times s$. The first $s$ variables are relevant for classification and the rest ones being noise. This simulated data was also used in \cite{wang2008hybrid,ye2011efficient}. We tested the speed of the algorithms on different sets of dimension $(n,p,s)$ and the correlation $\rho$.  The smoothing parameter for B-PGH and GCD was set to $\delta=1$ in this subsection, and $\lambda_3=\lambda_2$ is set in \eqref{eq:main}. Table \ref{table:syn_time} lists the average running time (sec) of 10 independent trials. For each run, we averaged the time over 25 different pairs of $(\lambda_1,\lambda_2)$. From the results, we can see that B-PGH was consistently faster than GCD and over 50 times faster than ADMM. In addition, the two-stage accelerated algorithm B-PGH-2 (see section \ref{sec:2-stage-accelerated-method}) was significantly faster than B-PGH when $p$ was large and $s\ll p$.

\begin{table*}
\caption{Running time (in seconds) of {B-PGH}, {GCD} and {ADMM}. Each result is the average of 10 independent trials$^*$. For each run, the time is averaged over 25 different pairs of $(\lambda_1,\lambda_2)$.}
\label{table:syn_time}
{\small $^*$ For $n=2000,p=20000$, the time for ADMM is over one half hour.}\vspace{0.1cm}
{\footnotesize
\begin{center}
\begin{tabular}{|c||cc|cc|cc|cc|}\hline
{Problems} & \multicolumn{2}{|c|}{B-PGH} & \multicolumn{2}{|c|}{B-PGH-2} & \multicolumn{2}{|c|}{GCD} & \multicolumn{2}{|c|}{ADMM}\\\hline\hline
$(n,p,s)$  & $\rho=0$ & $\rho=0.8$ & $\rho=0$ & $\rho=0.8$ & $\rho=0$ & $\rho=0.8$ & $\rho=0$ & $\rho=0.8$\\\hline
(500, 50, 5)  & 0.0060 & 0.0081 & 0.0059 & 0.0071 & 0.0116 & 0.0192 & 0.6354 & 0.8641\\\hline
(2000, 100, 10)  & 0.0176 & 0.0256 & 0.0173 & 0.0218 & 0.0848 & 0.1469 & 2.4268 & 3.5340\\\hline
(50, 300, 30)  & 0.0126 & 0.0162 & 0.0099 & 0.0113 & 0.0338 & 0.0409 & 0.6819 & 1.1729\\\hline
(100, 500, 50)  & 0.0179 & 0.0242 & 0.0117 & 0.0152 & 0.0727 & 0.0808 & 1.4879 & 2.5482\\\hline
(200, 2000, 100) & 0.0720 & 0.1227 & 0.0301 & 0.0446 & 0.5653 & 0.4735 & 7.9985 & 12.998\\\hline
(2000, 20000, 200)  & 5.7341 & 8.5379 & 1.1543 & 1.7531 & 32.721 & 30.558 & ------ & ------\\\hline
\end{tabular}
\end{center}}
\end{table*}

We also tested the prediction accuracy and variable selection of the algorithms. The problem dimension was set to $n=50,p=300,s=20$. The optimal pair of $(\lambda_1,\lambda_2)$ was selected from a large grid by 10-fold cross validation. 
We use $n_t$ for the number of selected relevant variables and $n_f$ for the number of selected noise variables. In addition, we use ``accu.'' for the prediction accuracy. The average results of 500 independent runs corresponding to $\rho=0$ and $\rho=0.8$ are shown in Table \ref{table:syn_accu}. During each run, the algorithms were compared on a test set of 1000 samples. From the table, we can see that B-PGH achieved similar accuracy to that by GCD, and they both performed better than ADMM, especially in the case of $\rho=0.8$. In addition, B-PGH-2 obtained similar solutions as B-PGH.

\begin{table*}
\caption{Classification accuracies and variable selections of {B-PGH}, {GCD} and {ADMM}. All results are the averages of 500 independent runs, each of which tests on a 1000 tesing set. The numbers in the parentheses are the corresponding standard errors.}
\label{table:syn_accu}
{\footnotesize
\begin{center}
\begin{tabular}{|c||ccc|ccc|}\hline
\multirow{2}{*}{Algorithms}&\multicolumn{3}{|c|}{$\rho=0$} & \multicolumn{3}{|c|}{$\rho=0.8$}\\
 & $n_t$ & $n_f$ & accu.(\%) & $n_t$ & $n_f$ & accu.(\%)\\\hline
B-PGH & 20.0(0.1) & 0.1(0.4) & 100(0.000) & 19.9(0.3) & 7.3(3.9) & 86.6(0.011)\\
B-PGH-2 & 20.0(0.1) & 0.1(0.3) & 100(0.000) & 19.9(0.4) & 7.5(4.1) & 86.6(0.011)\\
GCD & 20.0(0.1) & 0.1(0.3) & 100(0.000) & 19.9(0.4) & 7.4(3.8) & 86.4(0.011)\\
ADMM & 19.0(0.9) & 2.9(1.8) & 100(0.000) & 17.2(1.5) & 23.1(6.0) & 85.7(0.013)\\\hline
\end{tabular}
\end{center}}
\end{table*}

\subsubsection{Medium-scale real data}\label{sec:realdata}
In this subsection, we compare B-PGH, GCD and ADMM on medium-scale real data (see Table \ref{table:data}). The first seven datasets are available from the LIBSVM dataset\footnote{http://www.csie.ntu.edu.tw/$\sim$cjlin/libsvmtools/datasets} and the last three from Tom Mitchell’s neuroinformatics research group\footnote{http://www.cs.cmu.edu/$\sim$tom/fmri.html}. Both {\bf rcv1} and {\bf realsim} have large feature dimensions but only about $0.2\%$ nonzeros. {\bf colon}, {\bf duke} and {\bf leuk} are datasets of gene expression profiles for colon cancer, breast cancer and leukemia, respectively. The original dataset of {\bf colon} consists of 62 samples, and we randomly chose 30 of them for training and the rest for tesing. {\bf gisette} is a hand-writing digit recognition problem from NIPS 2003 Feature Selection Challenge. The training set for {\bf gisette} is a random subset of the original 6000 samples, and the testing set contains all of the original 1000 samples. {\bf rcv1} is a collection of manually cetegorized news wires from Reuters. Both the training and tesing instances for {\bf sub-rcv1} are randomly sub-sampled from the original training and tesing samples. {\bf realsim} contains UseNet articles from four discussion groups, for simulated auto racing, simulated aviation, real autos and real aviation. The original dataset of {\bf realsim} has 72,309 samples, and we randomly sub-sampled 500 instances for training and 1,000 instances for tesing. {\bf fMRIa}, {\bf fMRIb}, and {\bf fMRIc} are functional MRI (fMRI) data of brain activities when the
subjects are presented with pictures and text paragraphs.

\begin{table}
\caption{The size and data type of the tested real datasets}
\label{table:data}
{\footnotesize
\begin{center}
\resizebox{0.45\textwidth}{!}{\begin{tabular}{|c|c|c|c|c|}\hline
Dataset & \#training & \#testing & \#features & Type\\\hline
{\bf australian} & 200 & 490 & 14 & Dense\\
{\bf colon} & 30 & 32 & 2,000 & Dense \\
{\bf duke} & 32 & 10 & 7,129 & Dense \\
{\bf gisette} & 500 & 1,000 & 5,000 & Dense \\
{\bf leuk} & 38 & 34 & 7,129 & Dense \\
{\bf sub-rcv1} & 500 & 1,000 & 47,236 & Sparse \\
{\bf sub-realsim} & 500 & 1,000 & 20,958 & Sparse \\
{\bf fMRIa} & 30 & 10 & 1715 & Dense\\
{\bf fMRIb} & 30 & 10 & 1874 & Dense\\
{\bf fMRIc} & 30 & 10 & 1888 & Dense\\\hline
\end{tabular}}
\end{center}}
\end{table}

We fixed $\lambda_2=\lambda_3=1$ since the algorithms appeared not sensitive to $\lambda_2$ and $\lambda_3$. The optimal $\lambda_1$'s were tuned by 10-fold cross-validation on training sets. The smoothing parameter was set to $\delta=1$ for both B-PGH and GCD. All the other settings are the same as those in the previous test. The results are shown in Table \ref{table:real}. For comparison, we also report the prediction accuracies by LIBLINEAR \cite{fan2008liblinear} with $L_1$-regularized $L_2$-loss. From the results, we can see that B-PGH is significantly faster than GCD and ADMM, and it also gives the best prediction accuracy. B-PGH-2 was fastest and achieved the same accuracy as B-PGH except for {\bf gisette}. We want to mention that GCD can give the same accuracy as B-PGH but it needs to run much longer time, and ADMM can rarely achieve the same accuracy even if it runs longer since it solves a different model. 

\begin{table*}
\caption{Comparison results of {B-PGH}, {GCD}, {ADMM} and LIBLINEAR on real data. The best prediction accuracy for each dataset is highlighted in {\bf bold} and the best running time (sec) in {\it italics}.}
\label{table:real}
{\footnotesize
\begin{center}
\resizebox{\textwidth}{!}{\begin{tabular}{|c||ccc|ccc|ccc|ccc|c|}\hline
\multirow{2}{*}{Dataset} & \multicolumn{3}{|c|}{B-PGH} & \multicolumn{3}{|c|}{B-PGH-2} &  \multicolumn{3}{|c|}{GCD} & \multicolumn{3}{|c|}{ADMM}&LIBLINEAR\\\cline{2-14}
 & accu(\%) & supp & time & accu(\%) & supp & time & accu(\%) & supp & time & accu(\%) & supp & time&accu(\%)\\\hline
{\bf australian} &{\bf 87.4} & 11 & {\it 0.01} &{\bf 87.4} & 11 & {\it 0.01} &86.7 & 10 & 0.02 &86.9 & 14 & 1.08 & 85.7\\
{\bf colon}  & {\bf 84.4} & 89 & {\it 0.04} & {\bf 84.4} & 89 & 0.05 & {\bf 84.4} & 89 & 0.38 & {\bf 84.4} & 118 & 1.48 & 81.3\\
{\bf duke} & {\bf 90} & 118 & 0.20 & {\bf 90} & 118 & {\it 0.10} & {\bf 90} & 112 & 0.93 & {\bf 90} & 171 & 3.11 & 80\\
{\bf gisette}  & {\bf 92.9} & 977 & 1.99 & 92.7 & 946 & {\it 1.94} & 92.6 & 959 & 17.61 & 92.8 & 1464 & 218.5 &91.7\\
{\bf leuk}  & {\bf 91.2} & 847 & 0.19 & {\bf 91.2} & 846 & {\it 0.15} & 82.4 & 716 & 3.10 & 82.4 & 998 & 2.35 & {\bf 91.2}\\
{\bf sub-rcv1}  & {\bf 84.8} & 1035 & 0.06 & {\bf 84.8} & 1035 & {\it 0.05} & 83.3 & 1035 & 3.46 & 82.3 & 1776 & 2.61 &80.1\\
{\bf sub-realsim}  & {\bf 92.9} & 1134 & 0.04 & {\bf 92.9} & 1134 & {\it 0.02} & 91.9 & 1134 & 2.96 & 92.8 & 1727 & 1.61 & 90.9\\
{\bf fMRIa} & {\bf 90} & 141 & 0.07 & {\bf 90} & 141 & {\it 0.06} & 70 & 130 & 5.31 & 70 & 203 & 0.57 & 70 \\
{\bf fMRIb} & {\bf 100} & 1098 & 0.11 & {\bf 100} & 1108 & 0.07 & 90 & 180 & 2.26 & {\bf 100} & 1767 & {\it 0.03} & 70 \\
{\bf fMRIc} & {\bf 100} & 1827 & 0.10 & {\bf 100} & 1825 & 0.08 & 80 & 1324 & 2.05 & 90 & 1882 & {\it 0.06} & 50 \\\hline
\end{tabular}}
\end{center}}
\end{table*}

\subsubsection{Statistical comparison}\label{sec:stat-real}
We also performed the statistical comparison of {B-PGH}, GCD, and {ADMM}. Following \cite{demvsar2006statistical}, we did the Wilcoxon signed-ranks test\footnote{The Wilcoxon signed-ranks test is in general better than the paired $t$-test as demonstrated in \cite{demvsar2006statistical}.} \cite{wilcoxon1945individual} and Friedman test \cite{friedman1937use,friedman1940comparison} to see if the differences of the compared methods are significant. The former test is for pair-wise comparison and the latter one for multiple comparison. Specifically, 
for two different methods, denote $d_i$ as the difference of their score (e.g., prediction accuracy) on the $i$-th dataset and rank $d_i$'s based on their absolute value. Let
\begin{equation}\label{eq:z-value}
z=\frac{T-\frac{1}{4}N(N+1)}{\sqrt{\frac{1}{24}N(N+1)(2N+1)}},
\end{equation}
where $N$ is the number of datasets, $T=\min(R^+,R^-)$, and
\begin{align*}
&R^+=\sum_{d_i>0}\text{rank}(d_i)+\frac{1}{2}\sum_{d_i=0}\text{rank}(d_i),\\
&R^-=\sum_{d_i<0}\text{rank}(d_i)+\frac{1}{2}\sum_{d_i=0}\text{rank}(d_i).
\end{align*}
Table \ref{table:wilcoxon-real} shows the pair-wise $z$-values and the corresponding $p$-values of the five compared methods. At $p$-$\text{value}<0.05$, we find that there is no significant differences between B-PGH and B-PGH-2 and neither between GCD and ADMM, and any other pair of methods make significant difference.

The Friedman statistic can be calculated by 
\begin{equation}\label{eq:friedman}
\chi_F^2=\frac{12N}{K(K+1)}\left[\sum_{j=1}^K AR_j-\frac{K(K+1)^2}{4}\right],
\end{equation}
where $K$ is the number of compared methods, $AR_j=\sum_{i=1}^N r_i^j$ denotes the average rank of the $j$-th method, and $r_i^j$ is the rank of the $j$-th method on the $i$-th dataset. Table \ref{table:rank-real} shows the ranks of the compared methods according to their prediction accuracies, where average ranks are applied to ties. The $p$-value indicates significant difference among the five methods at $p$-$\text{value}<0.05$.

We further proceeded with a post-hoc test through Holm's step-down procedure \cite{holm1979simple}. Assuming the $p$-values for compared classifiers to the control one are ordered as $p_1\le p_2\le\ldots\le p_{K-1}$, Holm's step-down procedure starts from the largest one and compares it to $\frac{\alpha}{(K-1)}$, where $\alpha$ is the target $p$-value. If $p_1<\frac{\alpha}{(K-1)}$, it rejects the corresponding null-hypothesis and  compares $p_2$ to $\frac{\alpha}{(K-2)}$, and so on. We set B-PGH as the control classifier and used $(R_1-R_j)/\sqrt{\frac{K(K+1)}{6N}}$ to find the $p$-value for the $j$-th method compared to the control one. The $p$-values are 0.8875, 0.0072, 0.0477, and $5.57\times 10^{-5}$ respectively for B-PGH-2, GCD, ADMM, and LIBLINEAR. Hence, at $\alpha=0.05$,  B-PGH made significant difference with GCD and LIBLINEAR but not with B-PGH-2 or ADMM, and at  $\alpha=0.10$, B-PGH made significant difference with all other methods except B-PGH-2.

\begin{table}\caption{$z$-value (above diagonal) and $p$-value (below diagnal) of Wilcoxon signed-ranks test of {B-PGH}, {GCD}, {ADMM} and LIBLINEAR on real datasets.}\label{table:wilcoxon-real}
\centering
\resizebox{0.45\textwidth}{!}{\begin{tabular}{|c|ccccc|}
\hline
Methods & {B-PGH} & B-PGH-2 & {GCD} & {ADMM} & LIBLINEAR\\\hline
B-PGH & -- &  -0.5096 &  -2.6502 &  -2.4973 &  -2.7521\\
B-PGH-2   & 0.6102   &    -- &  -2.6502 &  -2.0896 &  -2.7521\\
 GCD  & 0.0080  &  0.0080    &     --  & -1.4780 &  -2.0386\\
 ADMM &  0.0126 &   0.0366   & 0.1394  &       --  & -2.0386\\
 LIBLINEAR  & 0.0060  &  0.0060 &   0.0414  &  0.0414        & --\\\hline
\end{tabular}}
\end{table} 

\begin{table}\caption{Friedman ranking of {B-PGH}, {GCD}, {ADMM} and LIBLINEAR according to their prediction accuaries on real datasets.}\label{table:rank-real}
\centering
\resizebox{0.45\textwidth}{!}{\begin{tabular}{|c|ccccc|}
\hline
Dataset & {B-PGH} & B-PGH-2 & {GCD} & {ADMM} & LBLINEAR\\\hline
\textbf{australian} & 1.5 & 1.5 & 4 & 3 & 5\\ 
\textbf{colon} & 2.5 & 2.5 & 2.5 & 2.5 & 5\\ 
\textbf{duke} & 2.5 & 2.5 & 2.5 & 2.5 & 5\\
\textbf{gisette} & 1 & 2 & 4 & 3 & 5 \\
\textbf{leuk} & 2 & 2 & 4.5 & 4.5 & 2\\
\textbf{sub-rcv1} & 1.5 & 1.5 & 3 & 4 & 5\\
\textbf{sub-realsim} & 1.5 & 1.5 & 4 & 3 & 5\\
\textbf{fMRIa} & 1.5 & 1.5 & 4 &  4 & 4 \\
\textbf{fMRIb} & 2 &   2 &  4 &  2 &  5 \\
\textbf{fMRIc}  & 1.5 & 1.5 & 4  & 3 & 5 \\\hline\hline
Average rank & 1.75 & 1.85 & 3.65 & 3.15 & 4.6\\\hline
 \multicolumn{6}{|c|}{$\chi_F^2$-$\text{value}= 23.56$, $p$-$\text{value}=9.78\times 10^{-5}$}\\\hline
\end{tabular}}
\end{table}

\subsubsection{Large-scale real data}
This subsection compares B-PGH, GCD and ADMM on two large-scale datasets. The first one is {\bf rcv1}, part of which has been tested in section \ref{sec:realdata}. It has 20,242 training samples, and each sample has 47,236 features. We use the same 1,000 samples as in section \ref{sec:realdata} for testing. The second dataset contains all the 72,309 samples of {\bf realsim}, part of which is used in section \ref{sec:realdata}, and each sample has 20,958 features. We randomly selected 20,000 samples for testing and the rest for training. Both the two datasets are highly sparse. For all algorithms, we fixed $b=0$ since we observed that using the bias term would affect the prediction accuracy. For B-PGH and GCD, we set $\delta=1$. The best values of $\lambda_1$ and $\lambda_2$ were searched from a large grid by 10-fold cross-validation. The parameters for B-PGH and GCD were set the same as above. ADMM suffered from memory problem since it needs to explicitly form the matrix $\mbfx^\top\mbfx$, which is dense even though $\mbfx$ is sparse, where the $i$-th column of $\mbfx$ was formed by the $i$-th data point $\bfx_i$. Hence, we did not report the results of ADMM. The results by B-PGH and GCD are shown in Table \ref{table:large-real}, where we also reported the prediction accuracy by LIBLINEAR for comparison. From the results we can conclude that both or our algorithms, B-PGH and B-PGH-2, are significantly faster (almost 400 times) than the GCD method. In addition, the accuracy of B-PGH and B-PGH-2 is very similar to that of GCD. 

\begin{table*}
\caption{Comparison of the accuracy and time (sec) for {B-PGH} and {GCD} on the large-scale real datasets {\bf rcv1} and {\bf realsim}.}
\label{table:large-real}
{\footnotesize
\begin{center}
\begin{tabular}{|c||ccc|ccc|ccc|c|}\hline
\multirow{2}{*}{Dataset} & \multicolumn{3}{|c|}{B-PGH} &  \multicolumn{3}{|c|}{B-PGH-2} & \multicolumn{3}{|c|}{GCD}&LIBLINEAR\\\cline{2-11}
 & accu(\%) &supp. & time & accu(\%) &supp. & time & accu(\%) &supp. & time & accu(\%)\\\hline
{\bf rcv1} & \bf{100} &2188& 9.72 & \bf{100} &2159 & {\it 9.54} & 99.7&4253 & 8384.57 & 99.5\\
{\bf realsim} & 96.7&3506 & {\it 16.81} & 96.7 &3429& 18.75 & 96.9&5092 & 8028.62 & \bf{97.0}\\\hline
\end{tabular}
\end{center}}
\end{table*}

\subsubsection{Effects of the smoothing parameter $\delta$}
We tested how $\delta$ affected B-PGH and GCD on the real datasets used in section \ref{sec:realdata}. Since the cost reduction by B-PGH-2 was not significant for these datasets as shown in Table \ref{table:real}, we did not include it in this test. All parameters were set to the same values as in section \ref{sec:realdata} except for $\delta$, which varied between $0.1$ and $0.01$. The running time and prediction accuracies are shown in Table \ref{table:delta}. Comparing the results with those in Table \ref{table:real}, we find that the algorithms tend to give more accurate predictions. However, the accuracy corresponding to $\delta=0.01$ is hardly improved over $\delta=0.1$. In addition, the solutions have more nonzeros, i.e., more features are selected. For these reasons, we do not recommend to choose very small $\delta$. We observed that $\delta\in[0.1,1]$ was fine in all our tests. Furthermore, comparing the columns that show the time in Tables  \ref{table:real} and \ref{table:delta} we observe that the efficiency of the GCD method was greatly affected by different values of $\delta$. In most cases, GCD can become significantly slow with small $\delta$'s. On the contrary, the efficiency of B-PGH was seldom affected by different values of $\delta$. 

\begin{table*}
\caption{Performance of {B-PGH} and {GCD} on real data with different values of the smoothing parameter ($\delta=0.1,0.01$) in the huberized loss function $\phi_H$.}
\label{table:delta}
{\footnotesize
\begin{center}
\begin{tabular}{|c||ccc|ccc||ccc|ccc|}\hline
\multirow{4}{*}{Dataset} & \multicolumn{6}{|c||}{$\delta = 0.1$} & \multicolumn{6}{|c|}{$\delta = 0.01$}\\\cline{2-13}
  & \multicolumn{3}{|c|}{B-PGH} &  \multicolumn{3}{|c||}{GCD}  & \multicolumn{3}{|c|}{B-PGH} &  \multicolumn{3}{|c|}{GCD}\\\cline{2-13}
 & accu(\%) &supp & time & accu(\%) &supp & time & accu(\%) &supp & time & accu(\%) &supp & time\\\hline
{\bf australian} & 85.3 & 11 & {\it 0.01} & 85.3 & 11 & 0.12 & 85.5 & 11 & {\it 0.01} & 86.1 & 11 & 0.24\\
{\bf colon} & 84.4& 109 & {\it 0.07} & 84.4 &106 & 1.16 & 84.4 &123 & {\it 0.14} & 84.4 &118 & 4.98\\
{\bf duke} & 90&147 & {\it 0.30} & 90 &158 & 4.98 & 90&344 & {\it 0.40} & 90 &181 & 10.3\\
{\bf gisette} & 93.1 & 1394 & {\it 2.74} & 93.2&1525 & 4.83  & 93.0&2781 & {\it 3.25} & 92.7&1788 & 19.2\\
{\bf leuk}  & 91.2 & 1006 & {\it 0.57} & 88.2&748 & 18.5  & 91.2&3678 & {\it 0.65} & 91.2 & 970& 18.5\\
{\bf sub-rcv1}  & 85.1 &1040 & {\it 0.03} & 85.1&1040 & 4.71  & 85.1&1040 & {\it 1.16} & 85.1&1040 & 10.7\\
{\bf sub-realsim}  & 93.2 &1145& {\it 0.02} & 93.2&1145 & 3.07  & 93.2&1145 & {\it 0.03} & 93.2 &1145& 8.92\\
{\bf fMRIa} & 90 & 156 & {\it 0.18} & 80 & 149 & 28.35& 90 & 237 & {\it 0.28} & 90 & 217 & 28.35\\
{\bf fMRIb} & 100 & 1335 & {\it 0.26} & 80 & 386 & 2.27& 100 & 1874 & {\it 0.23} & 100 & 1399 & 1.48\\
{\bf fMRIc} & 100 & 1856 & {\it 0.21} & 100 & 1269 & 2.19& 100 & 1888 & {\it 0.24} & 100 & 1888 & 2.56\\\hline
\end{tabular}
\end{center}}
\end{table*}

\subsection{Multi-class SVM} This subsection tests the performance of M-PGH for solving \eqref{eq:multihsvm} on a synthetic dataset, eight benchmark datasets, and also two microarray datasets. 
The parameters of M-PGH were set in the same way as those of B-PGH except we set $L_0=\frac{L_m}{nJ}$, where $L_m$ is given in \eqref{lip:multi}. For all the tests, $\delta=1$ was set in \eqref{eq:multihsvm}. 

\subsubsection{Synthetic data}
We compare model \eqref{eq:multihsvm} solved using M-PGH, the $\ell_1$-regularized M-SVM \cite{wang20071}, and the $\ell_\infty$-regularized M-SVM \cite{zhang2008variable} on a four-class example with each sample in $p$-dimensional space. The data in class $j$ was generated from the mixture of Gaussian distributions $\mcn(\bm{\mu}_j,\bm{\Sigma}_j), j = 1,2,3,4$. The mean vectors and covariance matrices are $\bm{\mu}_2=-\bm{\mu}_1, \bm{\mu}_4=-\bm{\mu}_3, \bm{\Sigma}_2=\bm{\Sigma}_1, \bm{\Sigma}_4=\bm{\Sigma}_3$, which take the form of
{\begin{align*}\bm{\mu}_1&=(\underset{s}{\underbrace{1,\cdots,1}},\underset{p-s}{\underbrace{0,\cdots,0}})^\top,\\ 
 \bm{\mu}_3&=(\underset{s/2}{\underbrace{0,\cdots,0}},\underset{s}{\underbrace{1,\cdots,1}},\underset{p-3s/2}{\underbrace{0,\cdots,0}})^\top,\\
\bm{\Sigma}_1&=\begin{bmatrix}
\rho\mathbf{1}_{s\times s}+(1-\rho)\mbfi_{s\times s} & \mathbf{0}_{s\times(p-s)}\\
\mathbf{0}_{(p-s)\times s}& \mbfi_{(p-s)\times (p-s)}\end{bmatrix},\\ \bm{\Sigma}_3&=\begin{bmatrix}
\mbfi_{\frac{s}{2}\times \frac{s}{2}} & \mathbf{0}_{\frac{s}{2}\times s}& \mathbf{0}_{\frac{s}{2}\times (p-\frac{3s}{2})} \\
\mathbf{0}_{s\times \frac{s}{2}} & \rho\mathbf{1}_{s\times s}+(1-\rho)\mbfi_{s\times s} & \mathbf{0}_{s\times (p-\frac{3s}{2})}\\
\mathbf{0}_{(p-\frac{3s}{2})\times \frac{s}{2}}& \mathbf{0}_{(p-\frac{3s}{2})\times s} & \mbfi_{(p-\frac{3s}{2})\times (p-\frac{3s}{2})}
\end{bmatrix}.\end{align*}}This kind of data has been tested in section \ref{sec:syn} for binary classifications. We took $p=500, s=30$ and $\rho=0, 0.8$ in this test. The best parameters for all three models were tuned by first generating $100$ training samples and another $100$ validation samples. Then we compared the three different models with the selected parameters on $100$ randomly generated training samples and $20,000$ random testing samples. The comparison was independently repeated 100 times. The performance of different models and algorithms were measured by prediction accuracy, running time (sec), the number of incorrect zeros (IZ), the number of nonzeros in each column (NZ1, NZ2, NZ3, NZ4). 

Table \ref{table:multisyn} summarizes the average results, where we can see that M-PGH is very efficient in solving \eqref{eq:multihsvm}. In addition we observe that \eqref{eq:multihsvm} tends to give the best predictions. 

\begin{table*}\caption{Results of different models solved by M-PGH and Sedumi on a four-class example with synthetic data. The numbers in the parentheses are corresponding standard errors. Highest predictions are highlighted.}
\label{table:multisyn}
{\footnotesize
\begin{center}
\begin{tabular}{|c|ccccccc|}\hline
 \multirow{2}{*}{models}&accu.(\%) & time  & IZ & NZ1 & NZ2 & NZ3 & NZ4  \\\cline{2-8}
& \multicolumn{7}{|c|}{$\rho=0$} \\\hline
\eqref{eq:multihsvm} by M-PGH & \textbf{96.7}(0.007) & {\it 0.017} & 29.43 & 28.59 & 29.19 & 28.78 & 29.14 \\
$\ell_1$-regularized \cite{wang20071} by Sedumi & 83.0(0.018) & 3.56 & 59.6 & 29.3 & 28.7 & 29.3 & 28.9 \\
$\ell_\infty$-regularized \cite{zhang2008variable}  by Sedumi  & 84.0(0.019) & 20.46 & 33.2 & 49.3 & 49.3 & 49.3 & 49.3 
\\\hline\hline
& \multicolumn{7}{|c|}{$\rho=0.8$}\\\hline
\eqref{eq:multihsvm} by M-PGH & \textbf{78.4}(0.020) & {\it 0.021} & 35.15 & 26.93 & 27.29 & 26.41 & 26.75\\
$\ell_1$-regularized \cite{wang20071} by Sedumi &  64.8(0.024) & 3.50 & 91.6 & 16.4 & 17.1 & 17.2 & 16.4\\
$\ell_\infty$-regularized \cite{zhang2008variable} by Sedumi  &  67.2(0.015) & 20.64 & 74.1 & 46.1 & 46.1 & 46.1 & 46.1
\\\hline
\end{tabular}
\end{center}}
\end{table*}

\subsubsection{Benchmark data}
In this subsection, we compare M-PGH to two popular methods for multicategory classification by binary-classifier. The first one is the ``one-vs-all'' (OVA) method \cite{bottou1994comparison} coded in LIBLINEAR library and another one the Decision Directed Acyclic Graph (DDAG) method\footnote{The code of DDAG is available from http://theoval.cmp.uea.ac.uk/svm/toolbox/} \cite{platt2000large}. We compared them on eight sets of benchmark data, all of which are available from the LIBSVM dataset. The problem statistics are shown in Table \ref{table:bench}. The original dataset of \textbf{connect4} has 67,557 data points. We randomly chose 500 for training and 1,000 for testing. All 2,000 data points in the training set of \textbf{dna} were used, and we randomly chose 500 for training and the rest for testing. \textbf{glass} has 214 data points, and we randomly chose 164 for training and another 50 for testing. 
For \textbf{letter}, we randomly picked 1300 out of the original 15,000 training samples with 50 for each class for training and 500 out of the original 5,000 testing points for testing. The \textbf{poker} dataset has 25,010 training and 1 million testing data points. For each class, we randomly selected 50 out of each class for training except the 6th through 9th classes which have less than 50 samples and hence were all used. In addition, we randomly chose 100k points from the testing set for testing. \textbf{protein} has 17,766 training data points and 6,621 testing points. We randomly chose 1,500 from the training dataset for training and all the points in the testing dataset for testing. 
 \textbf{usps} is a handwritten digit dataset consisting of 7291 training and 2007 testing digits from 0 to 9. We randomly picked 50 with 5 for each class out of the training set for training and 500 out of the testing set for testing. \textbf{wine} has 128 data points, and we randomly chose 50 for training and the rest for testing.

We fixed $\lambda_3=1$ in \eqref{eq:multihsvm} for M-PGH and tuned $\lambda_1,\lambda_2$. DDAG has two parameters $C,\gamma$, which were tuned from a large grid of values. The parameters for OVA were set to their default values in the code of LIBLINEAR. We did the tests for 10 times independently. The average prediction accuracies\footnote{Our reported accuracies are lower than those in \cite{hsu2002comparison} because we used fewer training samples.} by the three different methods are reported in Table \ref{table:bench-pred}. From the results, we see that M-PGH performs consistently better than OVA except for \textbf{letter} and also comparable with DDAG. When the training samples are sufficiently many, DDAG can give higher prediction accuracies such as for \textbf{glass} and \textbf{letter}. However, it can perform badly if the training samples are few compared to the feature numbers such as for \textbf{dna} and also the tests in the next subsection. In addition, note that the \textbf{poker} dataset is imbalanced, and our ``all-together'' model \eqref{eq:multihsvm} gives significantly higher accuracy than that of OVA. This suggests that "all-together" methods can perform better than "one-vs-all" in imbalanced datasets. We plan to investigate this further in a follow up paper.  

\begin{table}\caption{Statistics of eight benchmark datasets.}\label{table:bench}
{\footnotesize
\begin{center}
\resizebox{0.45\textwidth}{!}{\begin{tabular}{|c|c|c|c|c|}\hline
dataset & \#training & \#testing & \#feature & \#class\\\hline
\textbf{connect4} & 500 & 1000 & 126 & 3\\
\textbf{dna} & 500 & 1500 & 180 & 3 \\
\textbf{glass} & 164 & 50 & 9 & 6 \\
\textbf{letter} & 1300 & 500 & 16 & 26\\
\textbf{poker} & 352 & 100k & 10 & 10\\
\textbf{protein} & 1500 & 6621 & 357 & 3\\
\textbf{usps} & 50 & 500 & 256 & 10\\
\textbf{wine} & 50 & 128 & 13 & 3\\\hline
\end{tabular}
}
\end{center}
}
\end{table}

\begin{table}\caption{Prediction accuracies (\%) by different methods on benchmark data.}\label{table:bench-pred}
{\footnotesize
\begin{center}
\begin{tabular}{|c|c|c|c|}\hline
dataset & M-PGH & OVA & DDAG \\\hline
\textbf{connect4} & \textbf{53.28} & 51.20 & 53.07\\
\textbf{dna} & \textbf{92.82} & 89.04 & 33.71  \\
\textbf{glass} & 53.00 & 51.40 & \textbf{62.80} \\
\textbf{letter} & 43.24 & 65.90 & \textbf{80.96} \\
\textbf{poker} & \textbf{ 35.13} & 14.29 & 24.05\\
\textbf{protein} & \textbf{60.22} & 56.84 & 53.20 \\
\textbf{usps} & 74.44 & 73.46 & \textbf{76.28}\\
\textbf{wine} & \textbf{96.64} & 96.25 & 94.00 \\\hline
\end{tabular}
\end{center}
}
\end{table}

\subsubsection{Application to microarray classification}
This subsection applies M-PGH to microarray classifications. Two real data sets were used. One is the children cancer data set in \cite{khan2001classification}, which used cDNA gene expression profiles and classified the small round blue cell tumors (SRBCTs) of childhood into four classes: neuroblastoma (NB), rhabdomyosarcoma (RMS), Burkitt lymphomas (BL) and the Ewing family of tumors (EWS).  The other is the leukemia data set in \cite{golub1999molecular}, which used gene expression monitoring and classified the acute leuk-emias into three classes: B-cell acute lymphoblastic leuk-emia (B-ALL), T-cell acute lymphoblastic leukemia (T-ALL) and acute myeloid leukemia (AML). The original distributions of the two data sets are given in Table \ref{table:distribution}. Both the two data sets have been tested before on certain M-SVMs for gene selection; see \cite{wang20071,zhang2008variable} for example.

\begin{table*}\caption{Original distributions of SRBCT and leukemia data sets}\label{table:distribution}
{\footnotesize
\begin{center}
\begin{tabular}{|c|ccccc||cccc|}\hline
\multirow{2}{*}{Data set}&\multicolumn{5}{|c||}{SRBCT}&\multicolumn{4}{|c|}{leukemia}\\\cline{2-10}
& NB & RMS &BL & EWS &total & B-ALL & T-ALL & AML & total\\\hline
Training & 12 & 20 & 8 & 23 & 63 & 19 & 8 & 11 & 38\\
Testing & 6 & 5 & 3 & 6 & 20 & 19 & 1 & 14 & 34\\\hline
\end{tabular}
\end{center}}
\end{table*}

Each observation in the SRBCT dataset has dimension of $p=2308$, namely, there are 2308 gene profiles. We first standardized the original training data in the following way. Let $\mbfx^o=[\bfx_1^o,\cdots,\bfx_n^o]$ be the original data matrix. The standardized matrix $\mbfx$ was obtained by
$$\vspace{-0.1cm}x_{gj}=\frac{x_{gj}^o-\text{mean}(x_{g1}^o,\cdots,x_{gn}^o)}{\text{std}(x_{g1}^o,\cdots,x_{gn}^o)},\ \forall g, j.$$
Similar normalization was done to the original testing data. Then we selected the best parameters of each model by three-fold cross validation on the standardized training data. Finally, we put the standardized training and testing data sets together and randomly picked 63 observations for training and the remaining 20 ones for testing. The average prediction accuracy, running time (sec), number of nonzeros (NZ) and number of nonzero rows (NR) of 100 independent trials are reported in Table \ref{table:med-multireal}, from which we can see that all models give similar prediction accuracies. The $\ell_\infty$-regularized M-SVM gives denser solutions, and M-PGH is much faster than Sedumi.

The leukemia data set has $p=7,129$ gene profiles. We standardized the original training and testing data in the same way as that in last test. Then we rank all genes on the standardized training data by the method used in \cite{dudoit2002comparison}. Specifically, let $\mbfx=[\bfx_1,\cdots,\bfx_n]$ be the standardized data matrix. The relevance measure for gene $g$ is defined as follows:
$$\vspace{-0.2cm}R(g)=\frac{\sum_{i,j}I(y_i=j)(m_{g}^j-m_g)}{\sum_{i,j}I(y_i=j)(x_{gi}-m_{g}^j)},\
g = 1,\cdots,p,$$
where $m_g$ denotes the mean of $\{x_{g1},\cdots,x_{gn}\}$ and $m_{g}^j$ denotes the mean of $\{x_{gi}: y_i=j\}$. According to $R(g)$, we selected the 3,571 most significant genes. Finally, we put the processed training and tesing data together and randomly chose 38 samples for training and the remaining ones for testing. The process was independently repeated 100 times. We compared all the above five different methods for M-SVMs. Table \ref{table:med-multireal} summarizes the average results, which show that M-PGH is significantly faster than Sedumi and that model \eqref{eq:multihsvm} gives comparable prediction accuracies with relatively denser solutions. In addition, we note that DDAG performs very badly possibly because the training samples are too few.

\begin{table*}\caption{Comparison of computational results for the different methods on SRBCT and leukemia datasets}
\label{table:med-multireal}
{\footnotesize
\begin{center}
\begin{tabular}{|c|cccc|cccc|}\hline
 \multirow{2}{*}{Problems}& \multicolumn{4}{|c|}{SRBCT} & \multicolumn{4}{|c|}{leukemia}\\\cline{2-9}
&accu.(\%) & time & NZ & NR & accu.(\%) & time & NZ & NR\\\hline
\eqref{eq:multihsvm} by M-PGH & 98.6(0.051) & {\it 0.088} & 220.44 & 94.43 & \textbf{91.9}(0.049) & {\it 0.241} & 457.02 & 218.25  \\
$\ell_1$-regularized \cite{wang20071} by Sedumi &  \textbf{98.9}(0.022) & 13.82 & 213.67 & 96.71 &  85.2(0.063) & 11.21 & 82.41 & 40.00\\ 
$\ell_\infty$-regularized \cite{zhang2008variable} by Sedumi &  97.9(0.033) & 120.86 & 437.09 & 109.28 &  85.2(0.063) & 169.56 & 82.41 & 40.00 \\
OVA & 96.2 & --- & --- & --- & 81.5 & --- & --- & ---\\
DDAG & 72.0 & --- & --- & --- & 47.1 & --- & --- & ---\\\hline
\end{tabular}
\end{center}}
\end{table*}

\subsubsection{Statistical comparison}
As in section \ref{sec:stat-real}, we also performed statistical comparison of M-PGH, OVA, and DDAG on the benchmark datasets together with the two microarray datasets. Table \ref{table:wilcoxon-ben} shows their $z$-values and corresponding $p$-values of the Wilcoxon signed-rank test. From the table, we see that there is no significant difference between any pair of the three methods at $p$-$\text{value}<0.05$. However, M-PGH makes significant difference with OVA at $p$-$\text{value}<0.10$. Average ranks of M-PGH, OVA, and DDAG according to their prediction accuracies on the 10 datasets are shown in Table \ref{table:fried-ben} together with the Friedman statistic and $p$-value. Again, we see that there is no significant difference among the three methods at $p$-$\text{value}<0.05$ but there is at $p$-$\text{value}<0.10$. 
We further did a post-hoc test using the Holm's step-down procedure as in section \ref{sec:stat-real}. M-PGH was set as the control classifier. The $p$-$\text{values}$ are 0.0253 and 0.0736 respectively for OVA and DDAG. Hence, M-PGH made significant differences with OVA and DDAG at $p$-$\text{value}=0.10$.

\begin{table}\caption{$z$-value (above diagonal) and $p$-value (below diagnal) of Wilcoxon signed-ranks test of {M-PGH}, {OVA}, and {DDAG} on the eight benchmark and two microarray datasets.}\label{table:wilcoxon-ben}
\centering
{\footnotesize \begin{tabular}{|c|ccc|}\hline
Methods & M-PGH & OVA & DDAG\\\hline
M-PGH & --  & -1.7838  & -1.2741 \\
 OVA &   0.0745    &     --  & -0.5606 \\
 DDAG &   0.2026  &  0.5751   &      -- \\\hline
    \end{tabular}}
\end{table}

\begin{table}\caption{Average ranks of M-PGH, OVA, and DDAG according to their classification accuracies on the eight benchmark and two microarray datasets}\label{table:fried-ben}
\centering
\begin{tabular}{|c|ccc|}
\hline
Methods & M-PGH & OVA & DDAG\\\hline
Average rank & 1.4 & 2.4 & 2.2\\\hline
\multicolumn{4}{|c|}{$\chi_F^2$-$\text{value}=5.6$, $p$-$\text{value}=0.0608$}\\\hline
\end{tabular}
\end{table}

\section{Conclusions}\label{sec:conclusion}
SVMs have been popularly used to solve a wide variety of classification problems. The original SVM model uses the non-differentiable hinge loss function, which together with some regularizers like $\ell_1$-term makes it difficult to develop simple yet efficient algorithms. We considered the huberized hinge loss function that is a differentiable approximation of the original hinge loss. This allowed us to apply PG method to both binary-class and multi-class SVMs in an efficient and accurate way. In addition, we presented a two-stage algorithm that is able to solve very large-scale binary classification problems. Assuming strong convexity and under fairly general assumptions, we proved the linear convergence of PG method when applied in solving composite problems in the form of \eqref{eq:comp}, special cases of which are the binary and multi-class SVMs. We performed a wide range of numerical experiments on both synthetic and real datasets, demonstrating the superiority of the proposed algorithms over some state-of-the-art algorithms for both binary and multi-class SVMs. In particular for large-scale problems, our algorithms are significantly faster than compared methods in all cases with comparable accuracy. Finally, our algorithms are more robust to the smoothing parameter $\delta$ in terms of CPU time.


\section*{Acknowledgements}
The authors would like to thank four anonymous referees and the associate editor for their valuable comments and suggestions that improved the quality of our paper. The first author would also like to thank Professor Wotao Yin for his valuable discussion.

\appendix
\section{Proof of Proposition \ref{prop:f}}
First, we derive the convexity of each $f_i$ from the composition of the convex function
$\phi_H$ and the linear transformation $y_i(b+\bfx_i^\top \bfw)$ (e.g., see \cite{BoydVandenberghe04}). Thus, $f$ is also convex since it is the sum of $n$ convex functions. Secondly, it is easy to verify that 
$$\phi'_H(t)=\left\{
\begin{array}{ll}
0,&\text{ for }t>1;\\
\frac{t-1}{\delta},&\text{ for }1-\delta<t\le 1;\\
-1,&\text{ for }t\le 1-\delta;
\end{array}
\right.$$
and it is Lipschitz continuous, namely, $|\phi'_H(t)-\phi'_H(s)|\le\frac{1}{\delta}|t-s|,$ for any $t,s$. 

Using the notations in (\ref{notation_uv}) and by the chain rule, we have $\nabla f_i(\bfu)=\phi'_H(\bfv_i^\top \bfu)\bfv_i$, for $i=1,\cdots,n$ and
\begin{equation}\label{grad}
\nabla f(\bfu)=\frac{1}{n}\sum_{i=1}^n\phi'_H(\bfv_i^\top \bfu)\bfv_i.
\end{equation}
 Hence, for any $\bfu,\hat{\bfu}$, we have
$$
\begin{array}{rl}
&\|\nabla f(\bfu)-\nabla f(\hat{\bfu})\|\\[0.2cm]
\le&~\hspace{-0.2cm}\frac{1}{n}\sum_{i=1}^n\left\|\nabla f_i(\bfu)-\nabla f_i(\hat{\bfu})\right\|\\[0.2cm]
=&\frac{1}{n}\sum_{i=1}^n\left\|\left(\phi'(\bfv_i^\top \bfu)-\phi'(\bfv_i^\top \hat{\bfu})\right)\bfv_i\right\|\\[0.2cm]
\le&\frac{1}{n\delta}\sum_{i=1}^n|\bfv_i^\top (\bfu-\hat{\bfu})|\|\bfv_i\|\\[0.2cm]
\le & \frac{1}{n\delta}\sum_{i=1}^n \|\bfv_i\|^2\|\bfu-\hat{\bfu}\|,
\end{array}$$
which completes the proof.

\section{Proof of Theorem \ref{thm:finitesquare}}
We begin our analysis with the following lemma, which can be shown through essentially the same proof of Lemma 2.3 in \cite{BeckTeboulle2009}.
\begin{lemma}\label{lem:key}
Let $\{\bfu^k\}$ be the sequence generated by \eqref{eq:pg} with $L_k\le L$ such that for all $k\ge 1$,
\begin{equation}\label{eq:condL}
\begin{array}{rl}\xi_1({\bfu}^{k})\le &\xi_1(\hat{\bfu}^{k-1})
+\left\langle\nabla\xi_1(\hat{\bfu}^{k-1}), {\bfu}^{k}-\hat{\bfu}^{k-1}\right\rangle\\
&\hspace{1cm}+\frac{L_k}{2}\left\|{\bfu}^{k}-\hat{\bfu}^{k-1}\right\|^2.\end{array}
\end{equation}
Then for all $k\ge1$, it holds for any $\bfu\in\mcu$ that
\begin{equation}\label{eq:key}
\begin{array}{ll}\xi(\bfu)-\xi(\bfu^k)\ge& \frac{L_k}{2}\|\bfu^k-\hat{\bfu}^{k-1}\|^2+\frac{\mu}{2}\|\bfu-\bfu^k\|^2
\\[0.1cm]
&\hspace{0.5cm}+L_k\langle \hat{\bfu}^{k-1}-\bfu, \bfu^k-\hat{\bfu}^{k-1}\rangle.
\end{array}
\end{equation}
\end{lemma}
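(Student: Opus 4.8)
The plan is to treat \eqref{eq:key} as a standard ``three-point'' prox-inequality and derive it by assembling four ingredients: the first-order optimality condition of the minimization \eqref{eq:pg} defining $\bfu^k$, the gradient inequality for the convex $\xi_1$, the strong-convexity inequality for $\xi_2$, and the descent condition \eqref{eq:condL}. This is precisely the architecture of Lemma 2.3 in \cite{BeckTeboulle2009}; the only genuine difference is that the strong convexity of $\xi_2$ injects the extra term $\frac{\mu}{2}\|\bfu-\bfu^k\|^2$ that is absent in their non-strongly-convex setting.

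First I would record the optimality condition. Since $\bfu^k$ minimizes $Q(\cdot,\hat{\bfu}^{k-1})$ over the convex set $\mcu$, there exists a subgradient $\bfg^k\in\partial\xi_2(\bfu^k)$ such that
$$\langle\nabla\xi_1(\hat{\bfu}^{k-1})+L_k(\bfu^k-\hat{\bfu}^{k-1})+\bfg^k,\ \bfu-\bfu^k\rangle\ge0,\quad\forall\,\bfu\in\mcu.$$
Next I would write three bounds, each anchored at the correct base point: convexity of $\xi_1$ at $\hat{\bfu}^{k-1}$ gives $\xi_1(\bfu)\ge\xi_1(\hat{\bfu}^{k-1})+\langle\nabla\xi_1(\hat{\bfu}^{k-1}),\bfu-\hat{\bfu}^{k-1}\rangle$; strong convexity of $\xi_2$ at $\bfu^k$ gives $\xi_2(\bfu)\ge\xi_2(\bfu^k)+\langle\bfg^k,\bfu-\bfu^k\rangle+\frac{\mu}{2}\|\bfu-\bfu^k\|^2$; and \eqref{eq:condL} supplies the upper bound $\xi_1(\bfu^k)\le\xi_1(\hat{\bfu}^{k-1})+\langle\nabla\xi_1(\hat{\bfu}^{k-1}),\bfu^k-\hat{\bfu}^{k-1}\rangle+\frac{L_k}{2}\|\bfu^k-\hat{\bfu}^{k-1}\|^2$. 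Adding the first two and subtracting the third to form $\xi(\bfu)-\xi(\bfu^k)$, the $\xi_1(\hat{\bfu}^{k-1})$ and $\xi_2(\bfu^k)$ terms cancel, and the two gradient inner products collapse into $\langle\nabla\xi_1(\hat{\bfu}^{k-1}),\bfu-\bfu^k\rangle$, leaving
$$\xi(\bfu)-\xi(\bfu^k)\ge\langle\nabla\xi_1(\hat{\bfu}^{k-1})+\bfg^k,\bfu-\bfu^k\rangle+\frac{\mu}{2}\|\bfu-\bfu^k\|^2-\frac{L_k}{2}\|\bfu^k-\hat{\bfu}^{k-1}\|^2.$$
Finally I would invoke the optimality inequality to lower-bound $\langle\nabla\xi_1(\hat{\bfu}^{k-1})+\bfg^k,\bfu-\bfu^k\rangle$ by $L_k\langle\bfu^k-\hat{\bfu}^{k-1},\bfu^k-\bfu\rangle$, and then verify the elementary identity $L_k\langle\bfu^k-\hat{\bfu}^{k-1},\bfu^k-\bfu\rangle-\frac{L_k}{2}\|\bfu^k-\hat{\bfu}^{k-1}\|^2=\frac{L_k}{2}\|\bfu^k-\hat{\bfu}^{k-1}\|^2+L_k\langle\hat{\bfu}^{k-1}-\bfu,\bfu^k-\hat{\bfu}^{k-1}\rangle$, which is obtained simply by writing $\bfu^k-\bfu=(\bfu^k-\hat{\bfu}^{k-1})+(\hat{\bfu}^{k-1}-\bfu)$ and expanding. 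Substituting recovers \eqref{eq:key} exactly.

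The main obstacle is the careful bookkeeping around the \emph{constrained} minimization: the optimality condition must be expressed as a variational inequality over $\mcu$ (equivalently, with a normal-cone term) so that it is invoked only for feasible $\bfu\in\mcu$, and one must keep straight which base point each convexity inequality is anchored at ($\hat{\bfu}^{k-1}$ for $\xi_1$ versus $\bfu^k$ for $\xi_2$), since it is exactly this mismatch of anchors that makes the cross terms cancel. Everything downstream is a routine sign-and-expansion check; the strong-convexity hypothesis merely propagates the extra $\frac{\mu}{2}\|\bfu-\bfu^k\|^2$ untouched into the final estimate.
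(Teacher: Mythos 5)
Your proof is correct and follows essentially the same route as the paper, which establishes the lemma by ``essentially the same proof of Lemma 2.3 in \cite{BeckTeboulle2009}'': the optimality (variational-inequality) condition of the prox step \eqref{eq:pg}, convexity of $\xi_1$ anchored at $\hat{\bfu}^{k-1}$, strong convexity of $\xi_2$ anchored at $\bfu^k$ (which injects the extra $\frac{\mu}{2}\|\bfu-\bfu^k\|^2$ term), and the descent condition \eqref{eq:condL}. Your cancellations and the final expansion identity check out exactly, so the argument recovers \eqref{eq:key} as stated.
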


We also need the following lemma, which is the KL property for strongly convex functions.

\begin{lemma}\label{lem:KL}
Suppose $\xi$ is strongly convex with constant $\mu>0$.
Then for any $\bfu,\bfv\in\mathrm{dom}(\partial\xi)$ and $\bfg_\bfu\in\partial\xi(\bfu)$, we have
\begin{equation}\label{eq:KL}
\xi(\bfu)-\xi(\bfv)\le\frac{1}{\mu}\|\bfg_\bfu\|^2.
\end{equation}
\end{lemma}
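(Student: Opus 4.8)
The plan is to derive \eqref{eq:KL} directly from strong convexity alone, with no reference to the iteration or to Lemma \ref{lem:key}. First I would write down the defining inequality of strong convexity for $\xi$, but with the two points arranged so that the subgradient sits at $\bfu$: for any $\bfu\in\mathrm{dom}(\partial\xi)$, any $\bfg_\bfu\in\partial\xi(\bfu)$, and any $\bfv$,
\begin{equation*}
\xi(\bfv)\ge \xi(\bfu)+\langle\bfg_\bfu,\bfv-\bfu\rangle+\frac{\mu}{2}\|\bfv-\bfu\|^2.
\end{equation*}
Rearranging this at once yields the upper bound
\begin{equation*}
\xi(\bfu)-\xi(\bfv)\le \langle\bfg_\bfu,\bfu-\bfv\rangle-\frac{\mu}{2}\|\bfu-\bfv\|^2,
\end{equation*}
so the entire lemma reduces to bounding the right-hand side, a concave quadratic in the displacement $\bfu-\bfv$.

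Next I would bound this quadratic by its unconstrained maximum over the displacement. Applying Cauchy--Schwarz and then Young's inequality with the weighting chosen precisely to cancel the quadratic penalty,
\begin{equation*}
\langle\bfg_\bfu,\bfu-\bfv\rangle\le \|\bfg_\bfu\|\,\|\bfu-\bfv\|\le \frac{1}{2\mu}\|\bfg_\bfu\|^2+\frac{\mu}{2}\|\bfu-\bfv\|^2,
\end{equation*}
the $\frac{\mu}{2}\|\bfu-\bfv\|^2$ term is absorbed exactly, leaving
\begin{equation*}
\xi(\bfu)-\xi(\bfv)\le \frac{1}{2\mu}\|\bfg_\bfu\|^2\le\frac{1}{\mu}\|\bfg_\bfu\|^2,
\end{equation*}
which is \eqref{eq:KL}. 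Equivalently, one may complete the square and observe that the maximum of the quadratic is attained at $\bfu-\bfv=\frac{1}{\mu}\bfg_\bfu$, giving the same value $\frac{1}{2\mu}\|\bfg_\bfu\|^2$.

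There is essentially no hard step: the result is a two-line consequence of strong convexity together with Young's inequality, and it actually produces the sharper constant $\frac{1}{2\mu}$, so the stated $\frac{1}{\mu}$ is comfortably loose. The only point requiring a moment's care is the weighting in Young's inequality --- splitting $\|\bfg_\bfu\|\,\|\bfu-\bfv\|$ with coefficients $\frac{1}{2\mu}$ and $\frac{\mu}{2}$ is exactly what makes the penalty term vanish. I would also note that the bound holds uniformly in $\bfv$ because its right-hand side is independent of $\bfv$; it is tightest when $\bfv$ is the unique minimizer of $\xi$, in which case $\xi(\bfv)=\xi^\ast$ and the inequality specializes to the Polyak--{\L}ojasiewicz form $\xi(\bfu)-\xi^\ast\le\frac{1}{2\mu}\|\bfg_\bfu\|^2$ that drives the linear-convergence argument of Theorem \ref{thm:finitesquare}.
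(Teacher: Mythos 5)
Your proposal is correct and follows essentially the same route as the paper's proof: strong convexity with the subgradient placed at $\bfu$, followed by Cauchy--Schwarz to bound the resulting concave quadratic in $\|\bfu-\bfv\|$. The only difference is that you make explicit (via Young's inequality, or completing the square) the maximization step that the paper states without detail, which also yields the sharper constant $\frac{1}{2\mu}$ in place of the stated $\frac{1}{\mu}$.
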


\begin{proof}
For any $\bfg_\bfu\in\partial\xi(\bfu)$, we have from the strong convexity of $\xi$ and the Cauchy-Schwarz inequality that
$$\xi(\bfu)-\xi(\bfv)\le\langle\bfg_\bfu,\bfu-\bfv\rangle-\frac{\mu}{2}\|\bfu-\bfv\|^2\le\frac{1}{\mu}\|\bfg_\bfu\|^2,$$
which completes the proof.
\end{proof}

\section*{Global convergence}
We first show $\bfu^k\to\bfu^*$.
Letting $\bfu=\bfu^{k-1}$ in \eqref{eq:key} gives
\begin{align}
&\xi(\bfu^{k-1})-\xi(\bfu^k)\cr
\ge &\frac{L_k}{2}\|\bfu^k-\hat{\bfu}^{k-1}\|^2+\frac{\mu}{2}\|\bfu^{k-1}-\bfu^k\|^2\cr
&\hspace{0.5cm}+L_k\langle \hat{\bfu}^{k-1}-\bfu^{k-1}, \bfu^k-\hat{\bfu}^{k-1}\rangle\nonumber\\
=&\frac{L_k}{2}\|\bfu^k-{\bfu}^{k-1}\|^2+\frac{\mu}{2}\|\bfu^k-{\bfu}^{k-1}\|^2\cr
&\hspace{0.5cm}-\frac{L_k}{2}\omega_{k-1}^2\|\bfu^{k-1}-{\bfu}^{k-2}\|^2\nonumber\\
\ge&\frac{L_k}{2}\|\bfu^k-{\bfu}^{k-1}\|^2+\frac{\mu}{2}\|\bfu^k-{\bfu}^{k-1}\|^2\label{eq:diff}\\
&\hspace{0.5cm}-\frac{L_{k-1}}{2}\|\bfu^{k-1}-{\bfu}^{k-2}\|^2.\nonumber
\end{align}
Summing up the inequality \eqref{eq:diff} over $k$, we have
$$\begin{array}{rl}&\xi(\bfu^0)-\xi(\bfu^K)\\[0.1cm]
\ge&\frac{\mu}{2}\sum_{k=1}^K\|\bfu^{k-1}-\bfu^k\|^2
+\frac{L_K}{2}\|\bfu^K-\bfu^{K-1}\|^2,
\end{array}$$
which implies $\bfu^k-\bfu^{k-1}\to\mathbf{0}$ since $\xi(\bfu^k)$ is lower bounded.

Note that $\{\bfu^k\}$ is bounded since $\xi$ is coercive and $\{\xi(\bfu^k)\}$ is upper bounded by $\xi(\bfu^0)$. Hence, $\{\bfu^k\}$ has a limit point $\bar{\bfu}$, so there is a subsequence $\{\bfu^{k_j}\}$ converging to $\bar{\bfu}$. Note $\bfu^{k_j+1}$ also converges to $\bar{\bfu}$ since $\bfu^k-\bfu^{k-1}\to\mathbf{0}$. Without loss of generality, we assume $L_{k_j+1}\to \bar{L}$. Otherwise, we can take a convergent subsequence of $\{L_{k_j+1}\}$. By \eqref{eq:pg}, we have
$$\begin{array}{rl}\bfu^{k_j+1}=&\arg\underset{\bfu}{\min}
\xi_1(\hat{\bfu}^{k_j})+
\langle\nabla \xi_1(\hat{\bfu}^{k_j}),\bfu-\hat{\bfu}^{k_j}\rangle\\[0.1cm]
&\hspace{1cm}+\frac{L_{k_j+1}}{2}\|\bfu-\hat{\bfu}^{k_j}\|^2+\xi_2(\bfu).
\end{array}
$$
Letting $j\to\infty$ in the above formula and observing $\hat{\bfu}^{k_j}\to\bar{\bfu}$ yield
$$\begin{array}{rl}\hspace{-0.2cm}\bar{\bfu}=\arg\underset{\bfu}{\min}&\hspace{-0.2cm} \xi_1(\bar{\bfu})+
\langle\nabla \xi_1(\bar{\bfu}),\bfu-\bar{\bfu}\rangle
+\frac{\bar{L}}{2}\|\bfu-\bar{\bfu}\|^2+\xi_2(\bfu),
\end{array}
$$
which indicates $-\nabla \xi_1(\bar{\bfu})\in\partial\xi_2(\bar{\bfu})$. Thus $\bar{\bfu}$ is the minimizer of $\xi$.

Since $\{\xi(\bfu^k)\}$ is nonincreasing and lower bounded, it converges to $\xi(\bar{\bfu})=\xi(\bfu^*)$. Noting 
$\frac{\mu}{2}\|\bfu^k-\bfu^*\|^2\le\xi(\bfu^k)-\xi(\bfu^*),$
we have $\bfu^k\to\bfu^*$.

\section*{Convergence rate}
Next we go to show \eqref{eq:ratefista}.
Without loss of generality, we assume $\xi(\bfu^*)=0$. Otherwise, we can consider $\xi-\xi(\bfu^*)$ instead of $\xi$. 
In addition, we assume $\xi(\bfu^k)>\xi(\bfu^*)$ for all $k\ge0$ because if $\xi(\bfu^{k_0})=\xi(\bfu^*)$ for some $k_0$, then $\bfu^k=\bfu^{k_0}=\bfu^*$ for all $k\ge k_0$.

For ease of description, we denote $\xi_k=\xi(\bfu^k)$. 
Letting $\bfv=\bfu^*$ in \eqref{eq:KL}, we have
\begin{equation}\label{temp1}
\sqrt{\xi_k}\le\frac{1}{\sqrt{\mu}}\|\bfg_{\bfu^k}\|, \text{ for all }\bfg_{\bfu^k}\in\partial \xi(\bfu^k).
\end{equation}
Noting $-\nabla \xi_1(\hat{\bfu}^{k-1})-L_k(\bfu^k-\hat{\bfu}^{k-1})+\nabla \xi_1(\bfu^k)\in\partial \xi(\bfu^k)$ and $L_k\le L$,
we have for all $k\ge K$,
\begin{equation}\label{temp2}
\sqrt{\xi_k}\le\frac{2L}{\sqrt{\mu}}\left(\|\bfu^k-{\bfu}^{k-1}\|+\|\bfu^{k-1}-{\bfu}^{k-2}\|\right)
\end{equation}
Noting $\sqrt{\xi_k}-\sqrt{\xi_{k+1}}\ge \frac{\xi_k-\xi_{k+1}}{2\sqrt{\xi_k}}$ and using \eqref{eq:diff} yield 
$$\begin{array}{ll}&(L_{k+1}+\mu)\|\bfu^{k}-\bfu^{k+1}\|^2\\[0.1cm]
\le &L_k\|\bfu^{k-1}-\bfu^{k}\|^2+\frac{8L}{\sqrt{\mu}}\left(\sqrt{\xi_k}-\sqrt{\xi_{k+1}}\right)\|\bfu^k-{\bfu}^{k-1}\|\\[0.1cm]
&+\frac{8L}{\sqrt{\mu}}\left(\sqrt{\xi_k}-\sqrt{\xi_{k+1}}\right)\|\bfu^{k-1}-{\bfu}^{k-2}\|,\end{array}$$
after rearrangements. Take $0<\delta<\frac{1}{2}(\sqrt{L+\mu}-\sqrt{L})$. Using the inequalities $a^2+b^2\le (a+b)^2$ and $\sqrt{ab}\le ta+\frac{1}{4t}b$ for $a,b,t>0$, we have from the above inequality that
$$\begin{array}{ll}&\sqrt{L_{k+1}+\mu}\|\bfu^{k}-\bfu^{k+1}\|\\[0.1cm]
\le&\sqrt{L_k}\|\bfu^{k-1}-\bfu^{k}\|+\frac{2L}{\delta\sqrt{\mu}}\left(\sqrt{\xi_k}-\sqrt{\xi_{k+1}}\right)\\[0.1cm]
&\hspace{0.2cm}+\delta\left(\|\bfu^k-{\bfu}^{k-1}\|+\|\bfu^{k-1}-{\bfu}^{k-2}\|\right).\end{array}$$
Summing up the above inequality over $k$ and noting $\|\bfu^k-\bfu^{k+1}\|\to 0, \xi_k\to 0$ yield
$$\begin{array}{ll}&\sum_{k=m}^\infty\left(\sqrt{L_{k+1}+\mu}-\sqrt{L_{k+1}}-2\delta\right)\|\bfu^k-\bfu^{k+1}\|\\[0.1cm]
\le& \left(\sqrt{L}+2\delta\right)\|\bfu^{m-1}-\bfu^m\|+\delta\|\bfu^{m-2}-\bfu^{m-1}\|+\frac{2L}{\delta\sqrt{\mu}}\sqrt{\xi_m},\end{array}
$$
which together with $\sqrt{L+\mu}-\sqrt{L}\le \sqrt{L_k+\mu}-\sqrt{L_k}$ for all $k\ge0$ implies
\begin{equation}\label{temp3}
\begin{array}{ll}
&\sum_{k=m}^\infty\|\bfu^k-\bfu^{k+1}\|\\[0.1cm]
\le & C_1 \sqrt{\xi_m}+C_2\left(\|\bfu^{m-1}-\bfu^m\|+\|\bfu^{m-2}-\bfu^{m-1}\|\right),
\end{array}
\end{equation}
where
$$\begin{array}{l}C_1=\frac{2L}{\delta\sqrt{\mu}\left(\sqrt{L+\mu}-\sqrt{L}-2\delta\right)},\ C_2=\frac{\sqrt{L}+2\delta}{\sqrt{L+\mu}-\sqrt{L}-2\delta}.\end{array}$$
Denote $S_m=\sum_{k=m}^\infty\|\bfu^k-\bfu^{k+1}\|$ and write \eqref{temp3}
as
$$S_m\le C_1 \sqrt{\xi_m}+C_2(S_{m-2}-S_m),$$
which together with \eqref{temp2} gives
$$\begin{array}{rl}S_m\le &\left(C_1\frac{2L}{\sqrt{\mu}}+C_2\right)(S_{m-2}-S_m)\\[0.1cm]
=&C_3(S_{m-2}-S_m),\text{ for all }m\ge 2.
\end{array}$$
Let $\tau=\sqrt{\frac{C_3}{1+C_3}}$ and $C=S_0$. Then we have
$S_m\le C\tau^m, \forall m\ge0.$
Note $\|\bfu^m-\bfu^*\|\le S_m$, and thus we complete the proof.

\bibliographystyle{plain}

\end{document}